\theoremstyle{plain}
\newtheorem{theorem}{Theorem}[section]
\newtheorem{proposition}[theorem]{Proposition}
\theoremstyle{definition}
\newtheorem{definition}[theorem]{Definition}
\theoremstyle{remark}
\definecolor{darkblue}{rgb}{0, 0.12, 0.55}
\definecolor{darkgreen}{rgb}{0, 0.55, 0.12}
\definecolor{darkred}{rgb}{0.6,0,0}
\definecolor{darkgreen}{rgb}{0,0.6,0}
\definecolor{Gray}{gray}{0.9}
\definecolor{mymauve}{rgb}{0,0.6,0}
\definecolor{mygreen}{rgb}{0,0.6,0}
\definecolor{maincolor}{rgb}{0.2,0.5,0.7}
\definecolor{myblue}{rgb}{0.2549, 0.4118, 0.8824}
\definecolor{daoqi}{rgb}{0.1176, 0.5647, 1.0}
\definecolor{haijun}{rgb}{0.27, 0.51, 0.71}
\newtcolorbox{promptbox}{
    colback=gray!10,        
    colframe=black,         
    sharp corners,          
    boxrule=0.8pt,          
    left=6pt,               
    right=6pt,
    top=6pt,
    bottom=6pt
}
\icmltitlerunning{}
\newcommand{\ie}{\emph{i.e., }}
\newcommand{\eg}{\emph{e.g., }}
\newcommand{\nosection}[1]{\vspace{2pt}\noindent\textbf{#1.}}
\newcommand{\modelname}{\texttt{ODEdit}}
\newcommand{\revise}[1]{\textcolor{black}{#1}}
\begin{document}

\twocolumn[
  \icmltitle{Generalizable Multimodal Large Language Model Editing \\
via Invariant Trajectory Learning}



  \icmlsetsymbol{equal}{*}

  \begin{icmlauthorlist}
    \icmlauthor{Jiajie Su}{yyy}
    \icmlauthor{Haoyuan Wang}{yyy}
    \icmlauthor{Xiaohua Feng}{yyy}
    \icmlauthor{Yunshan Ma}{sch}
    \icmlauthor{Xiaobo Xia}{comp}\\
    \icmlauthor{Yuyuan Li}{zzz}
    \icmlauthor{Xiaolin Zheng}{yyy}
    \icmlauthor{Jianmao Xiao}{kkk}
    \icmlauthor{Chaochao Chen}{yyy,equal}
    
  \end{icmlauthorlist}

  \icmlaffiliation{yyy}{Zhejiang University, China}
   \icmlaffiliation{sch}{Singapore Management University, Singapore}
  \icmlaffiliation{comp}{National University of Singapore, Singapore}
  \icmlaffiliation{zzz}{Hangzhou Dianzi University, China}
  \icmlaffiliation{kkk}{Jiangxi Normal University, China}

  \icmlcorrespondingauthor{Chaochao Chen}{zjuccc@zju.edu.cn}

  \icmlkeywords{Machine Learning, ICML}

  \vskip 0.3in
]



\printAffiliationsAndNotice{}  

\begin{abstract}
Knowledge editing emerges as a crucial technique for efficiently correcting incorrect or outdated knowledge in large language models (LLMs).
Existing editing methods rely on a rigid mapping from parameter or module modifications to output, which causes the generalization limitation in Multimodal LLM (MLLM).
In this paper, we reformulate MLLM editing as an out-of-distribution (OOD) generalization problem, where the goal is to discern semantic shift with factual shift and thus achieve robust editing among diverse cross-modal prompting.
The key challenge of this OOD problem lies in identifying invariant causal trajectories that generalize accurately while suppressing spurious correlations.
To address it, we propose \modelname,a plug-and-play invariant learning based framework that optimizes the tripartite OOD risk objective to simultaneously enhance editing reliability, locality, and generality.
We further introduce an edit trajectory invariant learning method, which integrates a total variation penalty into the risk minimization objective to stabilize edit trajectories against environmental variations.
Theoretical analysis and extensive experiments demonstrate the effectiveness of \modelname.
%
\end{abstract}

\section{Introduction}

With rapid applications of large language models (LLMs) \citep{liu2024deepseek}, ensuring their knowledge correctness and currency in a cost-efficient manner has become a critical concern.
\textit{Knowledge editing} \citep{de2021editing,wang2023easyedit,wang2024knowledge} is an emerging technique that supports data-efficient modifications on pre-trained models within a specific scope of knowledge.
%
%
Existing LLM editing methods can be broadly categorized into: i) \textbf{parameter-adjusting} \citep{meng2022mass,fang2024alphaedit,jiang2025anyedit}, which directly tune a subset of parameters in the original model, and ii) \textbf{model-extending} methods \citep{huang2023transformer,hartvigsen2023aging,yu2024melo}, which attach auxiliary components while keeping the backbone parameters intact.
However, current studies remain largely confined to unimodal LLMs, leaving open their extension to multimodal LLMs (MLLMs) \citep{du2025mmke,guo2025balancedit}.

\begin{figure}[t]
\centering
\includegraphics[width=1.0\linewidth]{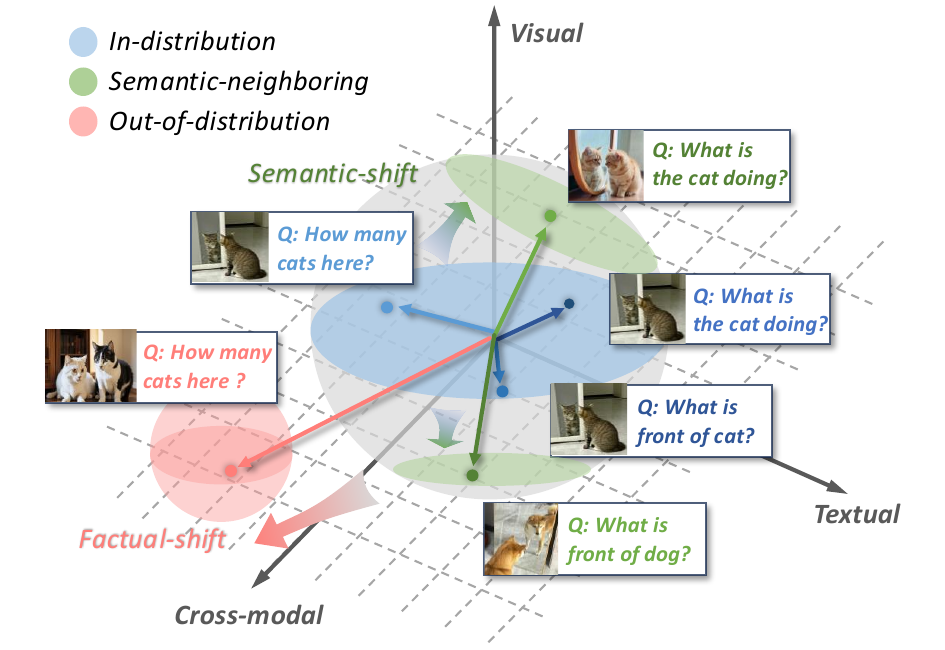}
\vspace{-0.25 in}
\caption{The motivation of \modelname. We illustrate how semantic-shift and factual-shift emerge in this editing OOD problem. We give out the formal definition of two shifts in Appendix \ref{app:shiftdefine}.} 
\label{story}
\vspace{-0.25 in}
\end{figure}

Several pioneer studies have made initial attempts to edit MLLMs.
UniKE \cite{pan2024towards} performs both parameter-adjusting and model-extending, by representing intrinsic and external knowledge as separate vectorized key-value memories.
BalancEdit \cite{guo2025balancedit} as a model-extending approach, generates positive and negative samples to determine the influence scope, and edits with a localized codebook in the latent space.
Despite advances in promoting edit success (\textit{reliability}), these works all operationalize editing as \textbf{a rigid mapping} from parameter modifications $\Delta W$ or auxiliary component $\Delta M$ to output variations $\Delta Y$, distilled from limited training samples.
Nevertheless, within MLLM, the output variation is shaped by cascaded reasoning as \textit{unimodal perception → cross-modal alignment → shared semantic space reasoning}, such that localized modification to parameters or modules can trigger complex and non-local semantic propagation effects.
Relying on such a rigid mapping thus imposes the \textbf{generalization problem} from two aspects:
i) \textit{First}, it fails to disentangle the coherent \textit{shared} semantic structures that span diverse cross-modal contexts.
This precludes MLLM from discovering reusable semantic substrates, reducing editing to a brittle case-wise alignment exercise rather than principled semantic \textbf{generality}.
ii) \textit{Second}, it drives MLLM to memorize entrenched linkages between local features and outputs.
The inflexible mapping misguides the model when faced with queries that share local features yet differ in specific causal semantics, thus compromising \textbf{locality}.

To address it, we rethink MLLM editing as an out-of-distribution (OOD) cross-modal semantic generalization problem.
%
OOD \citep{ye2021towards,montasser2024transformation} refers to identifying invariant versus spurious features that drive distribution shifts, enabling the model to generalize to unseen domains.
%
%
In Figure \ref{story}, editing MLLM involves partitioning cross-modal semantic scopes in prompts, which has two distributional shifts:
\revise{(i) \textit{Semantic-shift} indicates the shift from in-distribution editing scopes to neighboring regions, constituting generalized targets.
%
}
Once the editing instills the \textit{mirror reflection} principle into MLLM, the model can generalize from a narrowly edited instance (\textit{a tabby cat staring itself in the mirror}) to similar scenarios (\textit{an orange cat or dog doing the same thing}).
\revise{
(ii) \textit{Factual-shift} denotes the transition from in-distribution to out-of-distribution regions, encompassing extraneous concepts beyond editing scopes.}
%
%
The mirror reflection principle should not be overapplied to counting prompts when lacking mirror-specific visual features.
Building on these, robust MLLM editing requires identifying \textbf{invariant trajectories} for cross-modal predictions and removing \textbf{spurious factors} that disrupt semantical reasoning associations.
%



In this paper, we propose a plug-and-play editing OOD optimization framework for multimodal LLM, termed \modelname, which leverages cross-modal reasoning trajectory invariant learning to ensure knowledge editing robustness across diverse distributions.
%
%
To explicitly enhance the MLLM’s \textit{discriminative awareness of semantic-shift and factual-shift}, we first introduce a tripartite OOD risk formulation that imposes tailored constraints on in-distribution, semantic-neighboring, and out-of-distribution features.
%
%
We apply the Kullback-Leibler divergence regularization to preserve locality while developing a maximum mean discrepancy-based metric learning to align representations of edited concepts and their semantic variants.
To \textit{discern and stabilize the edit trajectories} across heterogeneous cross-modal environments, we further propose Edit Trajectory Invariant Learning (ETIL).
ETIL first reforms the editing OOD objective into an equivalent invariant risk minimization problem, where an environment-aware classifier is introduced to exploit feature invariance and irrelevance.
Then, to suppress the sensitivity of the edit trajectory to spurious environmental changes, ETIL integrates a Total Variation factor as the penalty term in the risk estimation.
The invariant risk minimization is achieved through a primal-dual optimization strategy, ensuring that the edited model captures reusable causal structures while filtering out superficial correlations.

Main contributions are
(1) We revisit MLLM editing from the OOD generalization perspective, and propose a plug-and-play optimization paradigm.
(2) We introduce a tripartite OOD risk that imposes tailored constraints on semantic- and factual-shift, and develop a trajectory invariant learning to minimize composed editing risk across diverse cross-modal prompting.
(3) We provide theoretical analyses and extensive experiments to validate effectiveness of \modelname.

\section{Related Work}

\nosection{Unimodal LLM Editing}
Model editing aims to modify the target knowledge in LLM while preserving irrelevant concepts.
Previous approaches can be divided into two types.
\textit{Parameter-adjusting methods} modify intrinsic parameters of LLMs to update new knowledge.
In this line, locate-then-edit models such as ROME \citep{meng2022locating}, MEMIT \citep{meng2022mass}, GLAME \citep{zhang2024knowledge},  AnyEdit \citep{jiang2025anyedit}, AlphaEdit \citep{fang2024alphaedit}, first identify crucial knowledge-related parameters and then perform targeted edits.
%
%
Besides, meta-learning based approaches like KE \citep{de2021editing}, MEND \citep{mitchell2021fast}, InstructEdit \citep{zhang2024instructedit}, determine parameter modifications by training hypernetworks. 
Contrastingly, \textit{model-extending methods} incorporate additional components to store new knowledge while keeping original model parameters.
The added components take diverse forms, including memory in SERAC \citep{mitchell2022memory} and WISE \citep{wang2024wise}, auxiliary neurons in T-Patcher \citep{huang2023transformer}, codebooks in GRACE \citep{hartvigsen2023aging}, and LoRA modules in MELO \citep{yu2024melo}.
Other works like MemPrompt \citep{madaan2022memprompt}, IKE \citep{zheng2023can}, and DeCK \citep{bi2024decoding} utilize in-context learning to update factual knowledge.
Despite their efficacy in unimodal LLM editing, they suffer from causal-underfit and causal-overfit issues in MLLM.
%

\nosection{Multimodal LLM Editing}
Recent advances in MLLMs \citep{li2023blip,li2024seed,Ma_Chen_Zhang_Wu_Ding_2025,luo2025next} have motivated research on multimodal knowledge editing \citep{pan2023finding,zhoum2edit}.
A series of benchmarks, \eg MMEdit \citep{cheng2023can}, MIKE \citep{li2024mike}, VLKEB \citep{huang2024vlkeb}, MC-MKE \citep{zhang2024mc}, MMKE \citep{du2025mmke}, provide unified datasets and evaluation to assess multimodal editing efficacy.
However, research on strengthening the robustness of MLLM editing methods holistically across reliability, locality, and generality remains underexplored.
MSCKE \citep{zeng2024visual} establishes a classifier-based knowledge editor to identify and update specific visual entities.
UniKE \citep{pan2024towards} integrates intrinsic knowledge editing and external knowledge resorting to promote locality and generality.
\revise{BalancEdit \citep{guo2025balancedit} performs codebook-based edits that balance generality and locality.}
Nevertheless, existing work remains constrained to rigid parameter-to-output mappings, which prevent MLLMs from intelligently distinguishing between semantic-shift and factual-shift, thereby hindering adaptive and robust editing.

\nosection{Out-of-Distribution Generalization}
OOD generalization is a core challenge in machine learning, aiming for generalization under covariate shift without access to data in the target domain \citep{muandet2013domain,arjovsky2019invariant,zhou2025fedgog}.
%
The mainstream works \citep{arjovsky2019invariant,krueger2021out,ahuja2020invariant,lai2024invariant} utilize invariant risk minimization with regularizer to explore invariant representations across different training environments.
Further, a wide range of techniques is leveraged to extract and generalize invariant features \citep{yu2023distribution}, \eg context-based augmentation \citep{nam2021reducing}, representation alignment \citep{dou2019domain,ruan2021optimal}, gradient manipulation \citep{shahtalebi2021sand}, distributional robust optimization \citep{ghosal2023distributionally}, and meta-learning \citep{chen2023secure}.
In this paper, we make the first attempt to cast MLLM editing as an OOD generalization problem, where invariant learning across editing environments is enforced via a total invariance regularizer on cross-modal semantic features, so as to improve editing robustness and adaptability.


\section{Preliminary}
\nosection{Out-of-Distribution Generalization}
Considering datasets $\mathcal{D}_{\text{e}}:=\{\mathbf{x}_i^{\text{e}},\mathbf{y}_i^{\text{e}}\}^{n_{\text{e}}}_{i=1}$ collected from diverse training environments ${\text{e}}\in \mathcal{E}_{train}$, the environments correspond to identical random variables assessed under distinct conditions.
The dataset $\mathcal{D}_{\text{e}}$ consists of i.i.d. samples drawn from the probability distribution $\mathcal{P}(\mathbf{X}^{\text{e}},\mathbf{Y}^{\text{e}})$.
OOD generalization targets at learning a predictor $f: \mathcal{X}\rightarrow\mathcal{Y}$ that minimizes the worst-case risk over a broad, potentially unseen set of environments $\mathcal{E}_{\text{all}} \supseteq \mathcal{E}_{\text{train}}$:
\begin{equation}
\small
    \mathcal{R}_{\text{OOD}}(f) = \max_{{\text{e}} \in \mathcal{E}_{\text{all}}} \mathbb{E}_{(\mathbf{X}^{\text{e}}, \mathbf{Y}^{\text{e}}) \sim \mathcal{P}^{\text{e}}} [\ell(f(\mathbf{X}^{\text{e}}), \mathbf{Y}^{\text{e}})].
    \nonumber
\end{equation}
Here, $\mathbb{E}_{(\mathbf{X}^{\text{e}}, \mathbf{Y}^{\text{e}}) \sim \mathrm{P}^{\text{e}}} [\ell(f(\mathbf{X}^{\text{e}}), \mathbf{Y}^{\text{e}})]$ denotes the risk under specific environment $e$, and $\ell$ is a suitable loss function. 
$\mathcal{E}_{\text{all}}$ includes environments not encountered during training.

\nosection{Invariant Risk Minimization (IRM)}
IRM \citep{arjovsky2019invariant,tan2023provably} generalizes invariant features to different environments.
Given training data as $\mathcal{D}:=\{(\mathbf{x}_i,\mathbf{y}_i)\in \mathcal{X} \times \mathcal{Y}\}$ where $\mathcal{X}$ and $\mathcal{Y}$ denotes the input and output space.
IRM constructs the learning model $\mathcal{X} \rightarrow \mathcal{Y}$ into two parts, \ie the feature extractor $\Psi: \mathcal{X} \rightarrow \mathcal{H}$ mapping input into the invariant feature space and the classifier $\omega: \mathcal{H} \rightarrow \mathcal{Y}$ predicting based on these features.
The empirical risk under environment $e$ is:
\begin{equation}
\small
    \mathcal{R}(\omega \circ \Psi, e) = \frac{1}{n} \sum_{i=1}^{n} \mathcal{L}(\omega \circ \Psi(\mathbf{x}_i), \mathbf{y}_i, e),
    \nonumber
\end{equation}
where $\mathcal{L}$ is the loss function.
The original IRM formulation is a bi-level optimization problem:
\begin{equation}
\small
\min_{\omega, \Psi} \sum_{e \in \mathcal{E}_{tr}} \mathcal{R}(\omega \circ \Psi, e)  \quad\text{s.t. } \omega \in \operatorname*{arg\,min}_{\tilde{w}} \mathcal{R}(\tilde{\omega} \circ \Psi, e), \forall e \in \mathcal{E}.
\nonumber
\end{equation}
This constraint requires $\omega$ to be optimal for each environment given $\Psi$, encouraging $\Psi$ to extract invariant features.
Further, IRMv1 \citep{arjovsky2019invariant} provides a surrogate form, which fixes the classifier $\omega$ to a constant scalar and replaces the constraint with a gradient norm penalty:
\begin{equation}
    \min_{\Psi} \sum_{e \in \mathcal{E}} \left\{ \mathcal{R}(1 \circ \Psi, e) + \lambda \left\| \nabla_{\omega|_{\omega=1}} \mathcal{R}(\omega \circ \Psi, e) \right\|_2^2 \right\}.
    \nonumber
\end{equation}

\section{Methodology}


\subsection{Problem Setting}\label{sec:setting}

\nosection{MLLM Editing as OOD Problem}
First, we formulate the knowledge editing task in a MLLM with the out-of-distribution generalization form.
Considering the MLLM as a function $\mathcal{M}:\mathcal{I} \times \mathcal{X} \rightarrow \mathcal{Y}$ with parameters $\phi$, which takes the cross-modal prompt $(\mathbf{m}_{\text{e}}, \mathbf{x}_{\text{e}})$ consisting of an image $\mathbf{m}_e$ and a textual description $\mathbf{x}_e$ as input, and generates $\mathbf{y}_o$ as the original output.
Denote the editing dataset containing facts to be updated as $\mathcal{D}_{\text{edit}}$, we define an environment factor $e \in \mathcal{E}$ which parameterizes the data distribution $\mathcal{P}_{{\text{e}}}(\textbf{M},\textbf{X},\textbf{Y})$, indicating all the possible causal associations that can occur in testing prompts.
The objective of MLLM editing is to update $\phi \rightarrow \phi_{\text{e}}$ for the worst-case risk $\mathcal{R}_{\text{edit}}(\phi_{\text{e}}, e)$ across all conceivable environments:
\begin{equation}
\small
\label{eq:ood}
    \min_{\phi_{\text{e}}} \max_{e \in \mathcal{E}}\mathbb{E}_{(\mathbf{m}_{\text{e}},\mathbf{x}_{\text{e}},\mathbf{y}_{\text{e}})\sim \mathcal{P}_{{\text{e}}}(\textbf{M},\textbf{X},\textbf{Y})} \mathcal{R}_{\text{edit}}(\phi_{\text{e}},(\mathbf{m}_{\text{e}},\mathbf{x}_{\text{e}},\mathbf{y}_{\text{e}}),e).
\nonumber
\end{equation}
According to existing benchmarks \cite{cheng2023can,huang2024vlkeb,du2025mmke}, the training and testing prompts $\mathcal{D}_{\text{test}}$ are composed of in-distribution data $\mathcal{D}_{\text{in}}$, semantic-neighboring data $\mathcal{D}_{\text{se}}$, and out-of-distribution data $\mathcal{D}_{\text{out}}$.
The overall risk is defined as $\mathcal{R}_{\text{edit}}$, \ie the composite measure of three editing metrics: $\mathcal{R}_{\text{rel}}$, $\mathcal{R}_{\text{gen}}$, and $\mathcal{R}_{\text{loc}}$, which respectively justify three aspects, \ie editing accuracy on $\mathcal{D}_{\text{IN}}$, side effects on $\mathcal{D}_{\text{OUT}}$, and generalization ability on $\mathcal{D}_{\text{SE}}$ :
\begin{equation}
\small
\begin{aligned}
    \mathcal{R}_{\text{rel}}:&= \mathbb{E}_{ (\mathbf{m}_{\text{e}},\mathbf{x}_{\text{e}},\mathbf{y}_{\text{e}})\sim\mathcal{P}_{\text{IN}}}\left[ \mathds{1}\{ \mathcal{M}(\mathbf{m}_{\text{e}},\mathbf{x}_{\text{e}};\phi_{\text{e}}),\mathbf{y}_{\text{e}})\}\right]\\
    \mathcal{R}_{\text{loc}}:&= \mathbb{E}_{ (\mathbf{m}_{\text{t}},\mathbf{x}_{\text{t}},\mathbf{y}_{\text{t}})\sim\mathcal{P}_{\text{OUT}}} \left[\mathds{1} \{\mathcal{M}(\mathbf{m}_{\text{t}},\mathbf{x}_{\text{t}};\phi_{\text{e}})=\mathcal{M}(\mathbf{m}_{\text{t}},\mathbf{x}_{\text{t}};\phi)\} \right]\\
    \mathcal{R}_{\text{gen}}:&= \mathbb{E}_{ (\mathbf{m}_{\text{r}},\mathbf{x}_{\text{r}},\mathbf{y}_{\text{r}})\sim\mathcal{P}_{\text{SE}}} \left[\mathds{1} \{\mathcal{M}(\mathbf{m}_{\text{e}},\mathbf{x}_{\text{e}};\phi_{\text{e}})=\mathcal{M}(\mathbf{m}_{\text{r}},\mathbf{x}_{\text{r}};\phi_{\text{e}}\} \right] 
\end{aligned}
\nonumber
\end{equation}

\subsection{Semantic-Factual Shift Disentanglement}\label{sec:alignment}

To facilitate MLLM discriminate editing environments between semantic-shift and factual-shift, we first design independent editing risks to evaluate transferability on invariant trajectories and capability to eliminate spurious factors.
Our framework aims to construct a unified optimization paradigm that is agnostic to specific editing methods, so it can be incorporated into any parameter-adjusting or model-extending editing approach based on fine-tuning.
With the pre-trained multimodal LLM $\mathcal{M}_{\phi}$ and editing dataset as $\mathcal{D}_{\text{edit}}$, we denote the editing model as $f_{\theta}$.
%
Editing is cast as learning a mapping $\Gamma$ that adapts the model and its parameters guided by the edit instance and $f_{\theta}$:
\begin{equation}
\small
    \mathcal{M}(\phi_{\text{e}}, \theta_{\text{e}})=\Gamma\left(\mathcal{M}_\phi, f_\theta ;\left(\mathbf{m}_{\text{e}}, \mathbf{x}_{\text{e}}, \mathbf{y}_{\text{e}}\right)\right)(\cdot), \left(\mathbf{m}_{\text{e}}, \mathbf{x}_{\text{e}}, \mathbf{y}_{\text{e}}\right)\sim \mathcal{P}_{\text{IN}}.
\end{equation}
Then, to optimize the three objectives outlined in Section \ref{sec:setting}, \ie reliability, locality, and generality, we propose corresponding risk metrics that are seamlessly integrated into these base editing models.

\nosection{Reliability Risk}
To ensure precise assimilation of the edited knowledge, we minimize the negative log-likelihood of the target output conditioned on the edit instance:
\begin{equation}
\small
    \mathcal{R}_{\text{rel}} = -\log p_{\phi_{\text{e}}}(\mathbf{y}_{\text{e}} \mid \mathbf{m}_{\text{e}}, \mathbf{x}_{\text{e}}), \quad \left(\mathbf{m}_{\text{e}}, \mathbf{x}_{\text{e}}, \mathbf{y}_{\text{e}}\right)\sim \mathcal{P}_{\text{IN}},
\end{equation}
which explicitly maximizes the probability of the desired output $\mathbf{y}_{\text{e}}$ for the edited input $(\mathbf{m}_{\text{e}}, \mathbf{x}_{\text{e}})$, ensuring accurate cognition on the in-distribution cross-modal semantics.

\nosection{Locality Risk}
In order to avoid the edited concepts affecting the interpretation of unrelated content falling within the factual-shift scope, we regularize the editing process by imposing a Kullback–Leibler divergence 
\citep{attias1999variational} penalty between the pre- and post-edit output distributions:
\begin{equation}
\scriptsize
    \mathcal{R}_{\text{loc}} = \mathrm{KL}\left(p_{\phi_{\text{e}}}\left(\cdot \mid \mathbf{m}_{\text{t}}, \mathbf{x}_{\text{t}}\right) \| p_\phi\left(\cdot \mid \mathbf{m}_{\text{t}}, \mathbf{x}_{\text{t}}\right)\right), \left(\mathbf{m}_{\text{t}}, \mathbf{x}_{\text{t}},\mathbf{y}_{\text{t}}\right)\sim \mathcal{P}_{\text{OUT}}.
\end{equation}
This constraint strengthens model capacity to preserve knowledge beyond the designated editing scope, minimizing unintended side effects.

\nosection{Generality Risk}
Previous methods \citep{mitchell2022memory,zeng2024visual} mostly emphasize supervised partitioning of in-scope and out-of-scope knowledge regions, but fall short in achieving semantic generalization, and thus cause issues of causal-underfit or causal-overfit.
Thus, we propose a generality risk for extracting invariant trajectories hidden underneath semantic-shift cross-modal prompting.
\revise{For each edited instance $(\mathbf{m}_{\text{e}},\mathbf{x}_{\text{e}})$, we utilize its rephrase counterparts $(\mathbf{m}_{\text{r}},\mathbf{x}_{\text{r}})$ from the benchmark training datasets.}
Let $\boldsymbol{z}_{\phi_e}(\mathbf{m}, \mathbf{x})$ denote the last hidden states of edited model $\mathcal{M}_{\phi_{\text{e}}}$ for prompt $(\mathbf{m}, \mathbf{x})$, we retrieve the distributions of edited prompts and rephrase prompts as $\boldsymbol{Z}_{E}$ and $\boldsymbol{Z}_{R}$ respectively.
Then we develop a Maximum Mean Discrepancy (MMD) \citep{tolstikhin2016minimax} based metric learning to measure the discrepancy between in- and semantic-neighboring distributions.
Given the Kernel Hilbert Space $\mathcal{H}$ associated with the Borel measurable kernel $k$, the mean embedding $\boldsymbol{\mu}_{\boldsymbol{Z}_{E}}$ and $\boldsymbol{\mu}_{\boldsymbol{Z}_{R}}$ is formulated with:
\begin{equation}
\small
\begin{aligned}
\boldsymbol{\mu}_{\boldsymbol{Z}_{E}}&=\int_{\mathbb{S}}k(s,\cdot)\boldsymbol{Z}_{E}(ds)\in \mathcal{H},\\
\boldsymbol{\mu}_{\boldsymbol{Z}_{R}}&=\int_{\mathbb{V}}k(v,\cdot)\boldsymbol{Z}_{R}(dv)\in \mathcal{H},
\end{aligned}
\end{equation}
where $s$ and $v$ are random variables with distribution $\boldsymbol{Z}_{E}$ and $\boldsymbol{Z}_{R}$.
It satisfies the distribution probability density equation that for all functions $f\in \mathcal{F}$:
\begin{equation}
\small
\begin{aligned}
    \mathbb{E}\left[f(S)\right]=\langle f, \boldsymbol{\mu}_{\boldsymbol{Z}_{E}}\rangle_{\mathcal{H}}, \quad 
    \mathbb{E}\left[f(V)\right]= \langle f, \boldsymbol{\mu}_{\boldsymbol{Z}_{R}}\rangle_{\mathcal{H}}.
\end{aligned}
\end{equation}
We deploy the multi-scale Gaussain kernel function $k(a_i, a_j) = \sum_{q=1}^{k} \exp\!\left(-\frac{\lVert a_i - a_j \rVert_2^2}{2\sigma_q^2}\right)$ in $\mathcal{H}$ to simultaneously capture local and global similarity between two instances, where $\sigma_q$ denotes the bandwidth of $q$-th kernel.
The generality risk in the MMD form is defined as:
\begin{equation}
\small
\begin{aligned}
    \mathcal{R}_{\text{gen}} =  &\mathbb{E}_{z_{\text{e}},z_{\text{e}}'\sim \boldsymbol{Z}_{E}}\left[k(\boldsymbol{z},\boldsymbol{z}_{\text{e}}')\right]
    +\mathbb{E}_{z_{\text{r}},z_{\text{r}}'\sim \boldsymbol{Z}_{R}}\left[k(\boldsymbol{z},\boldsymbol{z}_{\text{r}}')\right] \\
    &-2\mathbb{E}_{z_{\text{e}}\sim\boldsymbol{Z}_{E},z_{\text{r}}\sim \boldsymbol{Z}_{R}}\left[k(\boldsymbol{z}_{\text{e}},\boldsymbol{z}_{\text{r}})\right].
\end{aligned}
\end{equation}

\subsection{Edit Trajectory Invariant Learning}\label{sec:invariant}
With the editing OOD formulation in Section \ref{sec:setting} and the overall risk composed of supervised signals on two distributional shifts in Section \ref{sec:alignment}, we now introduce an invariant learning paradigm to discern and stabilize the edit trajectories across diverse cross-modal environments.
Our goal is to optimize edited model parameters $\phi_e$ to minimize the risk across environments, which requires exploiting invariance and specificity in causal pathways activated by edit.

\nosection{Transformation into IRM Problem}
To extract invariant trajectories, we invoke the IRM principle and employ a classifier $\omega$ which maps environment features to predictions \citep{lai2024invariant}.

\begin{proposition}[Equivalence between OOD-$\omega$ and IRM]
\label{prop:equivalence}
Under the condition that the environment variability is channeled through the classifier $\omega$, it satisfies the identity $\mathcal{R}_{\text{edit}}(\phi_e, e) \equiv \mathcal{R}_{\text{edit}}(\omega(e) \circ \phi_e)$. The OOD editing objective in Eq.(\ref{eq:ood}) admits the following equivalent IRM formulation:
\[
\min_{\phi_e} \max_{\omega \in \varSigma} \mathbb{E}_{(\mathbf{m}_{\text{e}}, \mathbf{x}_{\text{e}},\mathbf{y}_{\text{e}})\sim \mathcal{P}_e(\mathbf{M},\mathbf{X},\mathbf{Y})} \mathcal{R}_{\text{edit}}(\omega(\mathbf{m}_{\text{e}}, \mathbf{x}_{\text{e}},\mathbf{y}_{\text{e}}) \circ \phi_e).
\]
\end{proposition}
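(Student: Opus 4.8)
The plan is to read the equivalence as a re-indexing of the inner maximization rather than a genuine minimax exchange: both programs retain $\min_{\phi_e}$ on the outside, so no Sion-type swap is required, and the entire content is to show that sweeping the worst case over environments $\mathcal{E}$ coincides with sweeping it over the induced classifier family $\varSigma$. I would organize this into two movements, namely (i) pushing all environment dependence into the classifier via the channeling identity, and (ii) converting $\max_{e}$ into $\max_{\omega}$.

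First I would fix $\phi_e$ and apply the channeling hypothesis at the level of the per-sample risk: for every $e \in \mathcal{E}$ and every triple $(\mathbf{m}_{\text{e}}, \mathbf{x}_{\text{e}}, \mathbf{y}_{\text{e}})$, it supplies $\mathcal{R}_{\text{edit}}(\phi_e, (\mathbf{m}_{\text{e}}, \mathbf{x}_{\text{e}}, \mathbf{y}_{\text{e}}), e) = \mathcal{R}_{\text{edit}}(\omega(e) \circ \phi_e)$, where $e \mapsto \omega(e)$ assigns to each environment its induced predictor. Because this holds pointwise, taking $\mathbb{E}_{\mathcal{P}_e}$ preserves it, so the inner objective of Eq.~(\ref{eq:ood}) equals $\mathbb{E}_{\mathcal{P}_e}[\mathcal{R}_{\text{edit}}(\omega(e) \circ \phi_e)]$. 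I would then define $\varSigma := \{\omega(e) : e \in \mathcal{E}\}$ as the image of the environment-to-classifier map and set $g(\omega) := \mathbb{E}_{\mathcal{P}_e}[\mathcal{R}_{\text{edit}}(\omega \circ \phi_e)]$. Since the composed risk depends on $e$ only through $\omega(e)$, the two maxima coincide, $\max_{e \in \mathcal{E}} g(\omega(e)) = \max_{\omega \in \varSigma} g(\omega)$, where the bound $\le$ is immediate because $\omega(e) \in \varSigma$, and $\ge$ follows because each $\omega \in \varSigma$ is attained by some $e$. Reinstating $\min_{\phi_e}$ then reproduces the claimed program verbatim.

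The step I expect to be the real obstacle is guaranteeing that $g$ is a function of $\omega$ \emph{alone}, with no residual environment dependence leaking through the sampling law $\mathcal{P}_e$: since both $\mathcal{P}_e$ and $\omega(e)$ are indexed by $e$, the reduction is clean only under the \emph{strong} form of the channeling hypothesis, namely that once $\phi_e$ is fixed, all environmental variability, the distributional shift included, is mediated by $\omega$. I would make this explicit by arguing that, conditioned on the induced predictor, the composed risk is invariant to the remaining descriptors of $e$, so that two environments sharing the same $\omega(e)$ contribute identically to the objective; this invariance is exactly what the IRM feature/classifier decomposition is designed to encode. Under this reading the equivalence is exact, whereas if $\varSigma$ is enlarged to a surrounding functional class the identical argument yields the IRM program as a conservative upper bound on the OOD risk, which still certifies the original objective.
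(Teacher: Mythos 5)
Your proposal is correct and follows essentially the same route as the paper's proof: both reduce the claim to the two inequalities $\max_{e\in\mathcal{E}}\mathcal{R}_{\text{edit}}(\omega(e)\circ\phi_e)\le\max_{\omega\in\varSigma}\mathcal{R}_{\text{edit}}(\omega\circ\phi_e)$ (membership of $\omega(e)$ in $\varSigma$) and the reverse (surjectivity of $e\mapsto\omega(e)$ onto $\varSigma$, automatic once $\varSigma$ is taken as the image), then reinstate the outer minimization. Your explicit flagging that the sampling law $\mathcal{P}_e$ must also be absorbed by the channeling identity --- so the composed risk depends on $e$ only through $\omega(e)$ --- is a point the paper leaves implicit, and handling it via the strong form of the hypothesis is the right reading.
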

\begin{proof}
The proof can be found in Appendix \ref{app:ood-w}.
\end{proof}


\nosection{Invariant Learning in Editing Trajectory}
Directly optimizing the OOD-$\omega$ objective is intractable due to the need to evaluate the supremum over $\mathcal{E}$.
To this end, we reformulate it within a measure-theoretic framework inspired by the connection between IRM and Total Variations (TV) \citep{chan2006total}.
%
%
The TV operator typically employed to measure the global variability bound of a function.
For a function $f$ defined on a measure space $(\Omega,\mathcal{F}_{\Omega},\nu)$, the TV seminorm is given by
\begin{equation}
\scriptsize
    TV(f) := \sup\left\{ \int_{\Omega} f(\nu) \, \mathrm{div}\, g(\nu)  d\nu : g \in C_c^1(\Omega, \mathbb{R}^d), \|g\|_{\infty} \leq 1 \right\},
\end{equation}
where $g$ is a differentiable vector function supported compactly in $\Omega$ and $\mathrm{div} g$ denotes its divergence.
Based on the Coarea Formula \citep{chan2006total}, the canonical TV-$\ell_1$ can be derived to recover a clean signal $f$ from a noisy observation $\tilde{f}$ by solving the variational problem:
\begin{equation}
\small
    \inf_{f \in L^2(\Omega)} \left\{ \int_{\Omega} |\nabla f| + \lambda \int_{\Omega} (f - \tilde{f})^2  d \nu \right\}
\end{equation}
%
Here, TV-$\ell_1$ model pres sharp discontinuities while effectively removing noise and fine-scale details.
Correspondingly, we treat the environment-induced variations in the risk function $\mathcal{R}_{\text{edit}}(\omega \circ \phi_e)$ as noise perturbing the ideal and invariant edit trajectory.
The goal of editing is to \textit{denoise} the risk, recovering a piecewise-constant profile that is robust to spurious cross-modal prompting changes.
Inspired by \cite{lai2024invariant}, we further absorb TV-$\ell_1$ penalty into our editing IRM objective as
\begin{equation}
\small
\begin{aligned}
    \min_{\phi_e} \{ \mathbb{E}_{\omega}&[\mathcal{R}_{\text{rel}}(\omega \circ \phi_e)+\mathcal{R}_{\text{loc}}(\omega \circ \phi_e)+\mathcal{R}_{\text{gen}}(\omega \circ \phi_e)]\\ 
    &+ \lambda_{\phi_e} \left( \mathbb{E}_{\omega}[|\nabla_{\omega} \mathcal{R}_{\text{edit}}(w \circ \phi_e)|] \right)^2 \}.
    \label{eq:tv-ood}
\end{aligned}
\end{equation}
The first term represents the basic risk of editing, while the second term promotes invariance by encouraging the generalization risk to be insensitive to environmental changes.
This form directly addresses the dual requirements of precise \textit{knowledge assimilation} and \textit{controlled generalization}.

\begin{proposition}[IRM-TV objective Achieves Editing OOD with a varying $\lambda$]
\label{prop:ood-tv}
The balancing parameter $\lambda$ should vary with editing parameters $\phi_e$ to achieve editing OOD.
For each $\phi_e$, if $\mathbb{E}_{\omega}[|\nabla_{\omega} \mathcal{R}_{\text{edit}}(\omega \circ \phi_e)|] > 0$, there exists a non-negative $\lambda_{\phi_e}$, such that
\begin{equation}
\small
\begin{aligned}
    \max_{e \in \mathcal{E}} &\mathcal{R}_{\text{edit}}(\phi_e, e) = \mathbb{E}_{\omega}[\mathcal{R}_{\text{rel}}(\omega \circ \phi_e)+\mathcal{R}_{\text{loc}}(\omega \circ \phi_e)+\\
    &\mathcal{R}_{\text{gen}}(\omega \circ \phi_e)] + \lambda_{\phi_e} \left( \mathbb{E}_{\omega}[|\nabla_{\omega} \mathcal{R}_{\text{edit}}(\omega \circ \phi_e)|] \right)^2.
\end{aligned}
\end{equation}
Besides, the optimality of $\phi_e$ for IRM-TV form is equivalent to its optimality for OOD-$\omega$.
\end{proposition}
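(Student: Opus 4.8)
The plan is to reduce the whole statement to a pointwise (in $\phi_e$) identity between the worst-case risk and the penalized average risk, exploiting the fact that $\lambda$ is permitted to depend on $\phi_e$. First I would invoke Proposition~\ref{prop:equivalence} to trade the maximization over environments for a maximization over the classifier family $\varSigma$, writing $\max_{e\in\mathcal{E}}\mathcal{R}_{\text{edit}}(\phi_e,e)=\max_{\omega\in\varSigma}\mathcal{R}_{\text{edit}}(\omega\circ\phi_e)$, where the data expectation is folded into the notation $\mathcal{R}_{\text{edit}}(\omega\circ\phi_e)$. Under compactness of $\varSigma$ and continuity of $\omega\mapsto\mathcal{R}_{\text{edit}}(\omega\circ\phi_e)$, this maximum is attained and finite, so the gap $g(\phi_e):=\max_{\omega}\mathcal{R}_{\text{edit}}(\omega\circ\phi_e)-\mathbb{E}_{\omega}[\mathcal{R}_{\text{edit}}(\omega\circ\phi_e)]$ is well defined, and since a supremum dominates the average under any reference measure on $\varSigma$, it satisfies $g(\phi_e)\ge 0$.

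The second step absorbs this gap into the penalty. Because the hypothesis $\mathbb{E}_{\omega}[|\nabla_{\omega}\mathcal{R}_{\text{edit}}(\omega\circ\phi_e)|]>0$ guarantees a strictly positive denominator, I set
\[
\lambda_{\phi_e}:=\frac{g(\phi_e)}{\bigl(\mathbb{E}_{\omega}[|\nabla_{\omega}\mathcal{R}_{\text{edit}}(\omega\circ\phi_e)|]\bigr)^2}\ge 0.
\]
Substituting this back and using the additive composition $\mathcal{R}_{\text{edit}}=\mathcal{R}_{\text{rel}}+\mathcal{R}_{\text{loc}}+\mathcal{R}_{\text{gen}}$, so that $\mathbb{E}_{\omega}[\mathcal{R}_{\text{edit}}]=\mathbb{E}_{\omega}[\mathcal{R}_{\text{rel}}+\mathcal{R}_{\text{loc}}+\mathcal{R}_{\text{gen}}]$, recovers exactly the claimed identity. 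This is the construction underlying the TV-denoising analogy: the gradient-norm penalty, scaled by $\lambda_{\phi_e}$, quantifies precisely the environment-induced spread that the worst case adds on top of the average, and a zero penalty (an invariant trajectory) would force $g(\phi_e)=0$, i.e. worst case equals average.

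For the optimality equivalence, I would stress that the identity holds for \emph{every} $\phi_e$, so the OOD-$\omega$ objective $\phi_e\mapsto\max_{\omega}\mathcal{R}_{\text{edit}}(\omega\circ\phi_e)$ and the IRM-TV objective $\phi_e\mapsto\mathbb{E}_{\omega}[\mathcal{R}_{\text{edit}}]+\lambda_{\phi_e}(\mathbb{E}_{\omega}[|\nabla_{\omega}\mathcal{R}_{\text{edit}}|])^2$ coincide as functions on the parameter space. Two objectives that agree pointwise share the same set of global minimizers, which gives the stated equivalence; here the dependence of $\lambda$ on $\phi_e$ is essential, since a single constant $\lambda$ could not match $g(\phi_e)$ simultaneously at every $\phi_e$, and this is exactly why the proposition asserts a \emph{varying} $\lambda$.

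I expect the main obstacle to be not the algebra but the regularity bookkeeping that legitimizes each quantity: verifying that the supremum over $\varSigma$ is finite and attained (so $g(\phi_e)<\infty$), that $\omega\mapsto\mathcal{R}_{\text{edit}}(\omega\circ\phi_e)$ is differentiable so the gradient-norm penalty is defined, and that the reference measure on $\omega$ entering $\mathbb{E}_{\omega}$ is fixed and compatible with the supremum appearing in Proposition~\ref{prop:equivalence}. I would collect these as standing assumptions (compactness of $\varSigma$, boundedness and continuity of the loss, differentiability of the risk in $\omega$); with them in place, the existence and non-negativity of $\lambda_{\phi_e}$ follow immediately from $g(\phi_e)\ge 0$ together with the positive denominator, and the optimality claim is then a one-line consequence of pointwise equality of the two objectives.
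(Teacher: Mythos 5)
Your construction is essentially the paper's own proof: you invoke Proposition~\ref{prop:equivalence} to pass from environments to classifiers, define $\lambda_{\phi_e}$ as the gap $\max_{\omega\in\varSigma}\mathcal{R}_{\text{edit}}(\omega\circ\phi_e)-\mathbb{E}_{\omega}[\mathcal{R}_{\text{edit}}(\omega\circ\phi_e)]$ divided by $\bigl(\mathbb{E}_{\omega}[|\nabla_{\omega}\mathcal{R}_{\text{edit}}(\omega\circ\phi_e)|]\bigr)^2$, observe non-negativity because the supremum dominates the mean, and conclude the optimality equivalence from pointwise agreement of the two objectives. The paper's Appendix~\ref{app:irm-tv-ood} does exactly this, including handling the degenerate case $\mathbb{E}_{\omega}[|\nabla_{\omega}\mathcal{R}_{\text{edit}}|]=0$ by choosing $\lambda$ arbitrarily; your additional regularity bookkeeping (compactness of $\varSigma$, attainment of the sup) is a reasonable strengthening the paper leaves implicit.

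The one place where your argument falls short of the paper is the first clause of the proposition, that $\lambda$ \emph{must} vary with $\phi_e$. You justify this by saying a single constant $\lambda$ ``could not match $g(\phi_e)$ simultaneously at every $\phi_e$,'' but that only shows the two objectives would not coincide pointwise; it does not rule out that some fixed $\lambda$ still yields the same minimizer, which is what ``achieves editing OOD'' actually requires. The paper closes this gap with an explicit counterexample (risk $|\omega\cdot\phi_e+1|$ with $\phi_e\in[-1,1]$ and $\omega$ uniform on $[-0.9,0.1]$), computing that the OOD optimum sits at $\phi_e=0$ with value $1$ while the fixed-$\lambda$ IRM-TV objective $1-0.4\phi_e+\lambda\phi_e^2$ is minimized elsewhere for every $\lambda\ge 0$. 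If you want to establish the necessity claim rather than merely the sufficiency of a varying $\lambda$, you need such a counterexample or an equivalent argument about minimizers, not just about objective values.
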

\begin{proof}
The proof can be found in Appendix \ref{app:irm-tv-ood}.
\end{proof}


\nosection{Optimization on Editing IRM-TV}
To solve Eq.(\ref{eq:tv-ood}), we treat $\lambda_{\phi_e}$ as a Lagrangian multiplier and parameterize it as a function $\lambda(\pi, \phi_e)$ of both the editing model parameters $\phi_e$ and an auxiliary dual parameter set $\delta$.
We derive the Lagrangian function for the editing IRM-TV objective as 
\begin{equation}
\small
\begin{aligned}
    \mathcal{G}(\delta, \phi_e) = &\mathbb{E}_{\omega}[\mathcal{R}_{\text{rel}}(\omega \circ \phi_e)+\mathcal{R}_{\text{loc}}(\omega \circ \phi_e)+\mathcal{R}_{\text{gen}}(\omega \circ \phi_e)]] \\
    &+ \lambda(\delta, \phi_e) \left( \mathbb{E}_{\omega}[|\nabla_{\omega} \mathcal{R}_{\text{edit}}(\omega \circ \phi_e)|] \right)^2.
    \label{eq:13}
\end{aligned}
\end{equation}
Denote the risk sum as $\mathcal{R}_{\text{edit}}$, we derive it into a primal-dual optimization as \citep{wang2025out}
\begin{equation}
\small
\begin{aligned}
    &\min _{\phi_e} \max_{\delta} \mathcal{G}(\delta, \phi_e):=\min _{\phi_e}\{\mathbb{E}_\omega[\mathcal{R}_{\text{edit}}(w \circ \phi_e)]\\
    &+\max _{\delta}\left[\lambda(\delta, \phi_e)\left(\mathbb{E}_\omega\left[\left|\nabla_\omega \mathcal{R}_{\text{edit}}(\omega \circ \phi_e)\right|\right]\right)^2\right]\},
    \label{eq:14}
\end{aligned}
\end{equation}
where the \textit{primal variable} $\phi_e$ is optimized to minimize the overall risk, and \textit{dual variable} $\delta$ is optimized to maximize the TV penalty.
To solve it, an adversarial learning procedure is adopted, alternating between updating $\phi_e$ and $\delta$ with adaptive learning rates $\gamma_1$ and $\gamma_2$:
\begin{equation}
\small
\begin{aligned}
\phi_e^{(k+1)} &= \phi_e^{(k)} - \gamma_1^{(k)} \cdot \partial_{\phi_e} \mathcal{G}(\delta^{(k)}, \phi_e^{(k)}), \\
\delta^{(k+1)} &= \delta^{(k)} + \gamma_2^{(k)} \cdot \nabla_{\delta} \mathcal{G}(\delta^{(k)}, \phi_e^{(k+1)}).
\label{eq:15}
\end{aligned}
\end{equation}
The computation process of gradient $\nabla_{\delta} \mathcal{G}$ and subgradient $\partial_{\phi_e} \mathcal{G}$ are presented in Appendix \ref{app:gradients}.
Consequently, after optimizing two variables, we obtain an edit model $\phi_e$ that is both accurate and contained while being robustly generalizable through invariant mechanisms.

\section{Experiment}
\subsection{Experimental Setup}

\begin{table*}[t]
\centering
\scriptsize
\caption{Editing performance (\%). \textit{Rel.}, \textit{Gen.}, \textit{T-Loc.}, \textit{M-Loc.}, denote Reliability, Generality, Text Locality, and Image Locality. The higher scores are highlighted in bold. All improvements are significant with $p$-value $<$ 0.05 based on $t$-tests.}
\label{tab:compare}
\resizebox{\linewidth}{!}{
\begin{tabular}{ccc|cccc|cccc}
\toprule
\multirow{2.2}{*}{\textbf{Model}} &\multirow{2.2}{*}{\textbf{Type}} &\multirow{2.2}{*}{\textbf{Method}} 
& \multicolumn{4}{c|}{\textbf{Editing VQA (E-VQA)}}
& \multicolumn{4}{c}{\textbf{Editing Image Caption (E-IC)}}\\
\cmidrule(lr){4-7} \cmidrule(lr){8-11} 
& & & Rel.$\uparrow$ & Gen.$\uparrow$ & T-Loc.$\uparrow$ & M-Loc.$\uparrow$ & Rel.$\uparrow$ & Gen.$\uparrow$ & T-Loc.$\uparrow$ & M-Loc.$\uparrow$\\

\midrule

\multirow{12}{*}{\rotatebox{90}{\textbf{BLIP2-OPT 2.7B}}} & / & Pre-edited & 25.85	&26.37	&99.38	&92.83	&0	&0	&99.79	&94.93\\
\cmidrule(lr){2-11}
&Naive & FT & 100 & 100 & 93.94 & 64.79 & 100 & 0 & 78.79 & 29.58\\
& Model-extend& IKE  & 99.71 & 99.62 & 47.74 & 2.53 & 94.40 & 88.00 & 50.43 & 2.87\\ 
& Model-extend& SERAC & 97.60 & 97.30 & 100 & 3.21 & 99.71 & 99.71 & 100 & 2.64\\ 
\cmidrule(lr){2-11}
& Model-extend& WISE  & 100 & 83.33 & 40.94 & 16.89 & 100 & 85.93 & 33.61 & 11.89\\ 
& Model-extend& WISE+\modelname  & \textbf{100} & \textbf{83.33} & \textbf{41.24} & 13.83 & \textbf{100} & \textbf{87.33} & \textbf{39.37} & \textbf{14.77}\\ 
\cmidrule(lr){2-11}
& Para-adjust&  MEND & 97.80 & 97.20 & 99.68 & 94.23 & 77.90 & 62.80 & 98.14 & 78.86\\ 
& Para-adjust& MEND+\modelname & 97.60 & \textbf{97.20} & 99.52 & 91.75 & \textbf{79.40} & \textbf{64.40} & \textbf{99.01} & \textbf{86.14}\\ 
\cmidrule(lr){2-11}
& Model-extend& T-Patcher  & 80.35 & 77.82 & 87.14 & 85.28 & 72.78 & 72.75 & 71.59 & 80.49\\ 
& Model-extend& T-Patcher+\modelname  & \textbf{81.85} & \textbf{80.47} & 86.25 & \textbf{85.37} & \textbf{73.44} & \textbf{74.28 }& 71.18 & \textbf{81.67}\\ 
\cmidrule(lr){2-11}
&Both & UniKE  & 94.32 & 87.18 & 95.98 & 93.15 & 74.01 & 73.84 & 76.09 & 82.36\\ 
&Both & UniKE+\modelname  & \textbf{96.58} & \textbf{89.34} & \textbf{96.17} & \textbf{93.27} & \textbf{74.52} & \textbf{75.49} & \textbf{76.65} & \textbf{83.28}\\

\bottomrule
\toprule

\multirow{12}{*}{\rotatebox{90}{\textbf{MiniGPT-4 7B}}} & / & Pre-edited & 19.21	&24.08	&99.44	&91.56	&0	&0	&99.79	&94.93\\
\cmidrule(lr){2-11}
&Naive & FT & 100 & 100 & 97.50 & 40.85 & 100 & 0 & 95.00 & 39.83\\
& Model-extend& IKE  & 99.95 & 99.90 & 50.02 & 3.31 & 90.30 & 90.00 & 51.49 & 4.27\\ 
& Model-extend& SERAC & 91.70 & 98.60 & 99.99 & 3.72 & 83.60 & 93.10 & 99.99 & 4.65\\ 
\cmidrule(lr){2-11}
& Model-extend& WISE  & 100 & 100 & 90.10 & 52.15 & 100 & 91.58 & 92.81 & 70.68\\ 
& Model-extend& WISE+\modelname  & \textbf{100} & 97.50 & \textbf{92.39} & \textbf{62.14} & \textbf{100} & 90.04 & \textbf{94.54} & \textbf{73.17}\\ 
\cmidrule(lr){2-11}
& Para-adjust&  MEND & 96.20 & 96.00 & 99.42 & 88.25 & 77.80 & 74.60 & 99.28 & 87.85\\ 
& Para-adjust& MEND+\modelname  & \textbf{97.00} & \textbf{97.00} & \textbf{99.52} & \textbf{88.61} & \textbf{78.60 }& 74.20 & \textbf{99.36} & 86.77\\ 
\cmidrule(lr){2-11}
& Model-extend& T-Patcher  & 70.56 & 68.79 & 64.45 & 81.77 & 69.54 & 68.95 & 63.59 & 81.34\\ 
& Model-extend& T-Patcher+\modelname  & \textbf{72.38} & \textbf{72.11} & \textbf{65.29} & \textbf{82.93} & \textbf{71.42} & \textbf{70.98} & \textbf{65.03} & \textbf{82.75}\\ 
\cmidrule(lr){2-11}
&Both & UniKE  & 84.32 & 81.29 & 78.45 & 85.81 & 72.18 & 70.41 & 68.53 & 84.59\\ 
&Both & UniKE+\modelname  & \textbf{85.14} & \textbf{83.23} & \textbf{79.35} & \textbf{86.56} & \textbf{73.06} & \textbf{71.58} & \textbf{69.46} & \textbf{85.12}\\

\bottomrule

\end{tabular}
}
\vspace{-0.20 in}
\end{table*}

\nosection{Datasets \& Backbones \& Evaluation}
In line with previous work \citep{pan2024towards}, we conduct experiments on the MMEdit benchmark \citep{cheng2023can}, encompassing two sub-tasks, \ie Editing VQA (E-VQA) and Editing Image Captioning (E-IC).
Under this benchmark, we choose BLIP2-OPT \citep{li2023blip} and MiniGPT-4 \citep{zhu2023minigpt} as the base MLLM.
We utilize Reliability, Generality, T-Locality, and M-Locality as the evaluation metrics.

\nosection{Baseline Methods}
We compare \modelname~with four types of baselines:
(1) \textit{Fine-tuning}: FT directly tunes the last three layers of MLLM.
(2) \textit{Parameter-adjusting unimodal editing}: MEND \citep{mitchell2021fast}.
(3) \textit{Model-extending unimodal editing}: IKE \citep{zheng2023can}, SERAC \cite{mitchell2022memory}, T-Patcher \citep{huang2023transformer}, WISE \citep{wang2024wise}.
(4) \textit{Integrate parameter-adjusting and model-extending multimodal editing}: UniKE \citep{pan2024towards}.
\modelname~serves as a plug-and-play universal framework, capable of being seamlessly integrated into any editing model that relies on loss-based optimization.
%

\nosection{Implementation Details}
For the generality risk in \modelname, we employ a Gaussian RBF kernel with a multi-scale bandwidth strategy as the kernel function for MMD.
To achieve adaptive TV-$\ell_1$ penalty, we utilize a three-layer MLP with ReLU activations for the IRM-TV optimization, Xavier initialization for weights, and Softplus activation at the output to ensure positivity.
We choose Adam as the optimizer, and vary the learning rates in $\{0.0001,0.001,0.005,0.01\}$ for the IRM-TV network.
For all experiments, we repeat them five times and report the mean value of the results.
We conduct all of our experiments on an Ubuntu OS that contains 8 NVIDIA A40 GPUs.
%

\begin{table*}[t]
\vspace{-0.05in}
\centering
\small
\caption{\revise{Results of long-term editing on BLIP2-OPT. Here we set the sequential edit step as T=5 and T=10, respectively.}}
\vspace{-0.05in}
\label{tab:long-term}
\resizebox{\linewidth}{!}{
\begin{tabular}{cc|cccc|cccc}
\toprule
\multirow{2.2}{*}{\revise{\textbf{Dataset}}} 
&\multirow{2.2}{*}{\revise{\textbf{Model}}} 
& \multicolumn{4}{c|}{\revise{\textbf{T=5}}}
& \multicolumn{4}{c}{\revise{\textbf{T=10}}}\\
\cmidrule(lr){3-6} \cmidrule(lr){7-10} 
& & \revise{Rel.$\uparrow$} & \revise{Gen.$\uparrow$} & \revise{T-Loc.$\uparrow$} & \revise{M-Loc.$\uparrow$} & \revise{Rel.$\uparrow$} & \revise{Gen.$\uparrow$} & \revise{T-Loc.$\uparrow$} & \revise{M-Loc.$\uparrow$}\\
\midrule

\multirow{4}{*}{\textbf{Editing VQA (E-VQA)}}
& \revise{WISE} & \revise{44.50}	& \revise{34.75}	& \revise{0.40}	& \revise{0.15} & \revise{28.50} & \revise{24.55} & \revise{0.63} & \revise{0.15}\\
& \revise{WISE+\modelname} & \revise{49.42}	& \revise{43.52}	& \revise{0.80}	& \revise{0.15} & \revise{43.33} & \revise{24.22} & \revise{0.81} & \revise{0.15}\\
& \revise{UniKE} & \revise{90.28} & \revise{80.26} & \revise{91.41} & \revise{89.37} & \revise{86.52} & \revise{76.58} & \revise{87.64} & \revise{86.31}\\
& \revise{UniKE+\modelname} & \revise{92.63} & \revise{83.59} & \revise{92.38} & \revise{89.95} & \revise{89.79} & \revise{81.25} & \revise{89.35} & \revise{87.54}\\

\toprule

\multirow{4}{*}{\textbf{Editing Image Caption (E-IC)}}
& \revise{WISE} & \revise{84.31} & \revise{65.49} & \revise{0.76} & \revise{0.14} & \revise{75.96} & \revise{55.56} & \revise{0.71} & \revise{0.14}\\
& \revise{WISE+\modelname} & \revise{86.53} & \revise{66.94} & \revise{0.94} & \revise{0.14} & \revise{84.64} & \revise{61.70} & \revise{0.77} & \revise{0.14}\\
& \revise{UniKE} & \revise{70.16} & \revise{71.45} & \revise{72.09} & \revise{79.52} & \revise{63.54} & \revise{64.71} & \revise{66.29} & \revise{73.25}\\
& \revise{UniKE+\modelname} & \revise{71.05} & \revise{73.22} & \revise{72.68} & \revise{80.77} & \revise{65.87} & \revise{68.82} & \revise{67.11} & \revise{76.59}\\

\bottomrule

\end{tabular}
}
\vspace{-0.2 in}
\end{table*}

\subsection{Performance on One-step Knowledge Editing}
To evaluate editing performance, we conduct one-step editing experiments.
From Table \ref{tab:compare}, we can find:
1) \textbf{Previous methods fail to achieve balanced performance across all metrics when applied to multimodal editing tasks.}
Model-extending methods frequently suffer from poor locality, while parameter-adjusting methods often exhibit limited generality.
For instance, SERAC achieves high reliability and generality on E-VQA, but its M-Locality drops drastically.
MEND shows a significant generality gap with other SOTAs.
2) \textbf{\modelname~demonstrates strong adaptability across diverse baselines and consistently improves four evaluation metrics.}
On E-VQA with MiniGPT-4, T-Patcher+\modelname~improves generality with the promotion ratio as 4.82\%.
WISE+\modelname~improves M-Locality by 19.2\% with MiniGPT-4 on E-VQA, while T-Locality by 17.2\% with BLIP2 on E-IC.
UniKE+\modelname~outperforms UniKE on all metrics.
%
\revise{3) \textbf{The balanced improvement across metrics underscores that effective OOD generalization equates to holistic performance elevation, not merely gains in the generality dimension.} \modelname~accurately determines the generalization boundary, and its core contribution is extracting invariant editing trajectories to both mitigate causal underfit and causal overfit, thus resolving the trade-off between locality and generality.}

\subsection{\revise{Performance on Long-term Knowledge Editing}}
\revise{
%
Following \cite{pan2024towards}, we typically set the $T$-step sequential editing scenario, where the model is edited sequentially for each instance in the editing set with a capacity of $T$.
After the $T$-th edit, we evaluate the post-edit MLLM.
We report the results for $T=5$ and $T=10$ on both E-VQA and E-IC tasks.
From Table \ref{tab:long-term}, we find:
1) Unimodal editors like WISE fail catastrophically in multimodal long-term editing, particularly in preserving locality and generality.
On E-VQA, WISE's T-Loc. collapses to near zero, demonstrating the rigid editing mapping cannot adaptively modify MLLM's causal reasoning.
2) Even specialized multimodal editors like UniKE exhibit performance decay over time.
This indicates that without explicit invariance learning, sequential edits cause interference and erode previously learned knowledge.
3) \modelname~consistently mitigates this decay and enhances stability. 
By learning invariant trajectories, \modelname~preserves higher reliability, generality, and locality, and the improvement becomes more pronounced as $T$ increases.
These results prove the ability of \modelname~to discern and stabilize core causal features against the variations introduced by successive edits.
}

\begin{table}[t]
\centering
\caption{Ablation studies on each OOD risk.}
\vspace{-0.05in}
\label{tab:ab-risk}
\resizebox{1.0\linewidth}{!}{
\begin{tabular}{c|cccc|cccc}
\toprule
\multirow{2.2}{*}{\textbf{Invariants}} 
& \multicolumn{4}{c|}{\textbf{Editing VQA (E-VQA)}}
& \multicolumn{4}{c}{\textbf{Editing Image Caption (E-IC)}}\\
\cmidrule(lr){2-5} \cmidrule(lr){6-9} 
& Rel.$\uparrow$ & Gen.$\uparrow$ & T-Loc.$\uparrow$ & M-Loc.$\uparrow$ & Rel.$\uparrow$ & Gen.$\uparrow$ & T-Loc.$\uparrow$ & M-Loc.$\uparrow$\\
\midrule
w/o $\mathcal{R}_{rel}$ &0	&0	&99.85	&98.63	&0	&0	&95.81	&97.22\\
w/o $\mathcal{R}_{loc}$ &96.65	&89.51	&74.36	&71.25	&74.62	&75.45	&64.49	&73.19\\
w/o $\mathcal{R}_{gen}$ &96.49	&86.59	&95.97	&93.54	&74.60	&71.24	&75.83	&83.60\\
\modelname &96.58	&89.34	&95.46	&93.27	&74.52	&75.49	&75.65	&83.28\\

\bottomrule

\end{tabular}
}
\vspace{-0.1 in}
\end{table}

\subsection{In-Depth Analysis}

\nosection{Ablations of OOD Risks}
We conduct ablation studies by removing each risk separately.
The results in Table \ref{tab:ab-risk} show:
1) \textbf{Reliability risk is essential for knowledge assimilation.}
The removal of $\mathcal{R}_{rel}$ causes Rel. and Gen. to collapse, showing MLLM fails to learn the intended knowledge.
2) \textbf{Locality risk constraints editing within scope.}
Ablating $\mathcal{R}_{loc}$ leads to a decrease in T-Loc. and M-Loc., indicating edit effects spill over into irrelevant knowledge areas, causing causal-overfit and violating locality. 
\revise{3) \textbf{Generality risk facilitates semantic generalization.}}
\revise{The addition of $\mathcal{R}{gen}$ yields a substantial gain in the Gen. metric (e.g., from 86.59 to 89.34 on E-VQA and from 71.24 to 75.49 on E-IC). While it induces minimal fluctuations in Rel. and minor variations in Loc., this aligns with our OOD formulation where the three risks exhibit inherent cross-effects. $\mathcal{R}{gen}$ primarily works by aligning semantic-neighboring samples with the edited instance in the latent space, which successfully promotes invariant feature learning to prevent causal underfit. The slight impact on locality can be attributed to the potential reinforcement of local concept-output associations as a byproduct of this semantic alignment process.}


\begin{table}[t]
\centering
\small
\caption{Ablation studies on the MMD alignment.}
\vspace{-0.05in}
\label{ablation-mmd}
\begin{tabular}{l|cccc}
\toprule
Invariant & Rel. & Gen. & T-Loc. & M-Loc. \\
\midrule
MMD-s RBF & 79.40 & 64.40 & 99.01 & 86.14 \\
MMD-s Linear & 78.80 & 63.20 & 98.49 & 80.75 \\
MMD-m RBF & 76.81 & 63.59 & 99.00 & 85.73 \\
Contrast & 76.40 & 63.80 & 99.00 & 89.91 \\
\bottomrule
\end{tabular}
\vspace{-0.1in}
\end{table}

\nosection{Effects of Maximum Mean Discrepancy Alignment}
We perform ablations with invariants:
(a) \textit{MMD-s RBF} denotes MMD with a Radial Basis Function (RBF) kernel and a single rephrase prompt.
(b) \textit{MMD-s Linear} with linear kernel.
(c) \textit{MMD-m RBF} utilizes multiple rephrase prompts.
(d) \textit{Contrast} replace MMD with contrastive learning.
Results in Table \ref{ablation-mmd} show that \textit{MMD-s RBF} achieves the most balanced and effective performance.
\textit{MMD-s Linear} is less effective at capturing cross-modal semantic distributions.
The gap between \textit{Contrast} and \textit{MMD-s RBF} underscores advantages of a distribution-level alignment objective over instance-level.
An insightful finding is that using multiple rephrase prompts yields no additional benefit.
The potential reason is that a single rephrase prompt provides a focused semantic transformation path, while multiple prompts introduce noisy variations which might lead to spurious correlations.
\begin{table}
\centering
\small
\caption{\revise{Results on other MLLMs.}}
\vspace{-0.05in}
\label{table:llava}
\begin{tabular}{l|cccc}
\toprule
\revise{E-VQA} & \revise{Rel.} & \revise{Gen.} & \revise{T-Loc.} & \revise{M-Loc.} \\
\midrule
\revise{WISE} & \revise{100} & \revise{71.42} & \revise{91.51} & \revise{93.75} \\
\revise{WISE+\modelname} & \revise{100} & \revise{72.01} & \revise{94.40} & \revise{95.47} \\
\midrule
\revise{E-IC} & \revise{Rel.} & \revise{Gen.} & \revise{T-Loc.} & \revise{M-Loc.} \\
\midrule
\revise{WISE} & \revise{99.89} & \revise{81.28} & \revise{92.69} & \revise{94.63} \\
\revise{WISE+\modelname} & \revise{99.98} & \revise{82.22} & \revise{92.74} & \revise{95.92} \\
\bottomrule
\end{tabular}
\vspace{-0.1in}
\end{table}

\begin{figure}
\centering
\includegraphics[width=0.8\linewidth]{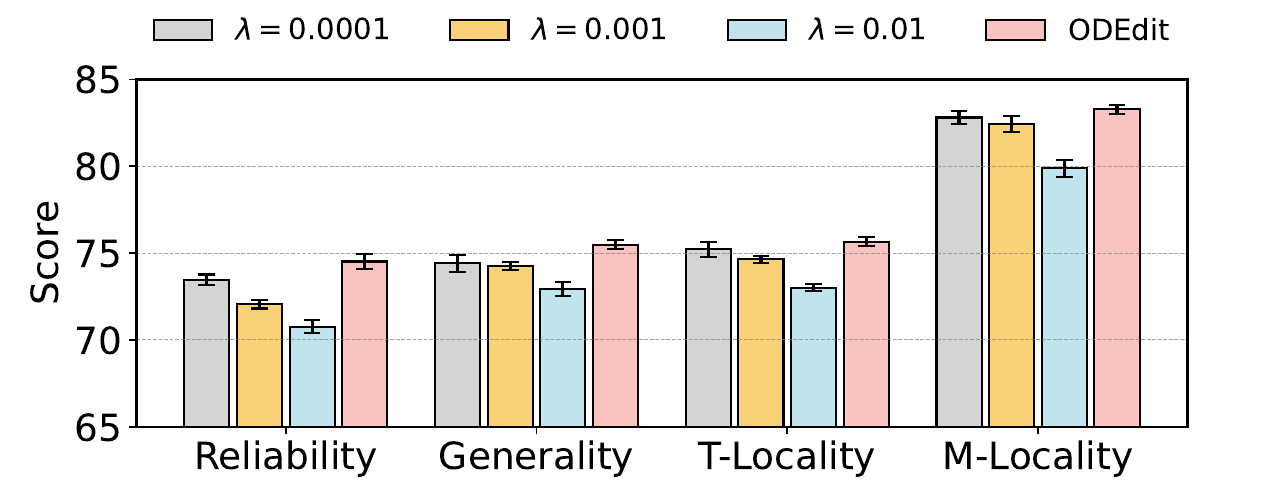}
\caption{Ablation studies of IRM-TV optimization.}
\label{fig:ablation_ood}
\vspace{-0.2in}
\end{figure}

\nosection{Performance on other MLLM backbones.}
\revise{
We further conduct editing on other MLLMs, \ie LLaVA \citep{liu2023visual}.
From Table \ref{table:llava}, \modelname~enhances WISE across all metrics on LLaVA, with particularly notable gains in generality and locality. These robust improvements on a distinct MLLM architecture underscore the strong generalizability of \modelname, which stems from its core design of learning invariant editing trajectories that effectively suppress spurious correlations across diverse model backbones.
}

\begin{table*}
\centering
\caption{\revise{Computational cost comparison on E-IC. Memo. = Memory usage, Edit-T/sp = Editing time per step, Train-T/sp = Training time per step, All-T/sp = Total time per step.}}
\label{tab:compute-cost}
\resizebox{1.0\linewidth}{!}{
\begin{tabular}{c|ccccc|ccccc}
\toprule
\multirow{1.8}{*}{\revise{\textbf{Models}}} 
& \multicolumn{5}{c|}{\revise{\textbf{BLIP2-OPT}}}
& \multicolumn{5}{c}{\revise{\textbf{MiniGPT-4}}}\\
\cmidrule(lr){2-6} \cmidrule(lr){7-11} 
& \revise{Memo.} & \revise{Edit-T/sp} & \revise{Train-T/sp} & \revise{All-T/sp} & \revise{Steps.} 
& \revise{Memo.} & \revise{Edit-T/sp} & \revise{Train-T/sp} & \revise{All-T/sp} & \revise{Steps.}\\
\midrule

\revise{WISE} & \revise{28.47GB} & \revise{0.401} & \revise{0.023} & \revise{0.424} & \revise{7} 
              & \revise{36.59GB} & \revise{0.473} & \revise{0.035} & \revise{0.508} & \revise{8}\\

\revise{WISE+\modelname} & \revise{47.53GB} & \revise{0.442} & \revise{0.043} & \revise{0.485} & \revise{7} 
                        & \revise{69.36GB} & \revise{0.533} & \revise{0.072} & \revise{0.605} & \revise{8}\\

\revise{MEND} & \revise{14.75GB} & \revise{1.369} & \revise{0.018} & \revise{1.387} & \revise{25000} 
              & \revise{25.30GB} & \revise{1.676} & \revise{0.188} & \revise{1.865} & \revise{10000}\\

\revise{MEND+\modelname} & \revise{36.05GB} & \revise{1.480} & \revise{0.116} & \revise{1.596} & \revise{45000} 
                        & \revise{62.80GB} & \revise{1.861} & \revise{0.204} & \revise{2.066} & \revise{15000}\\

\bottomrule
\end{tabular}
}
\end{table*}

\begin{figure}
\centering

\begin{subfigure}{0.32\linewidth}
    \centering
    \includegraphics[width=\linewidth]{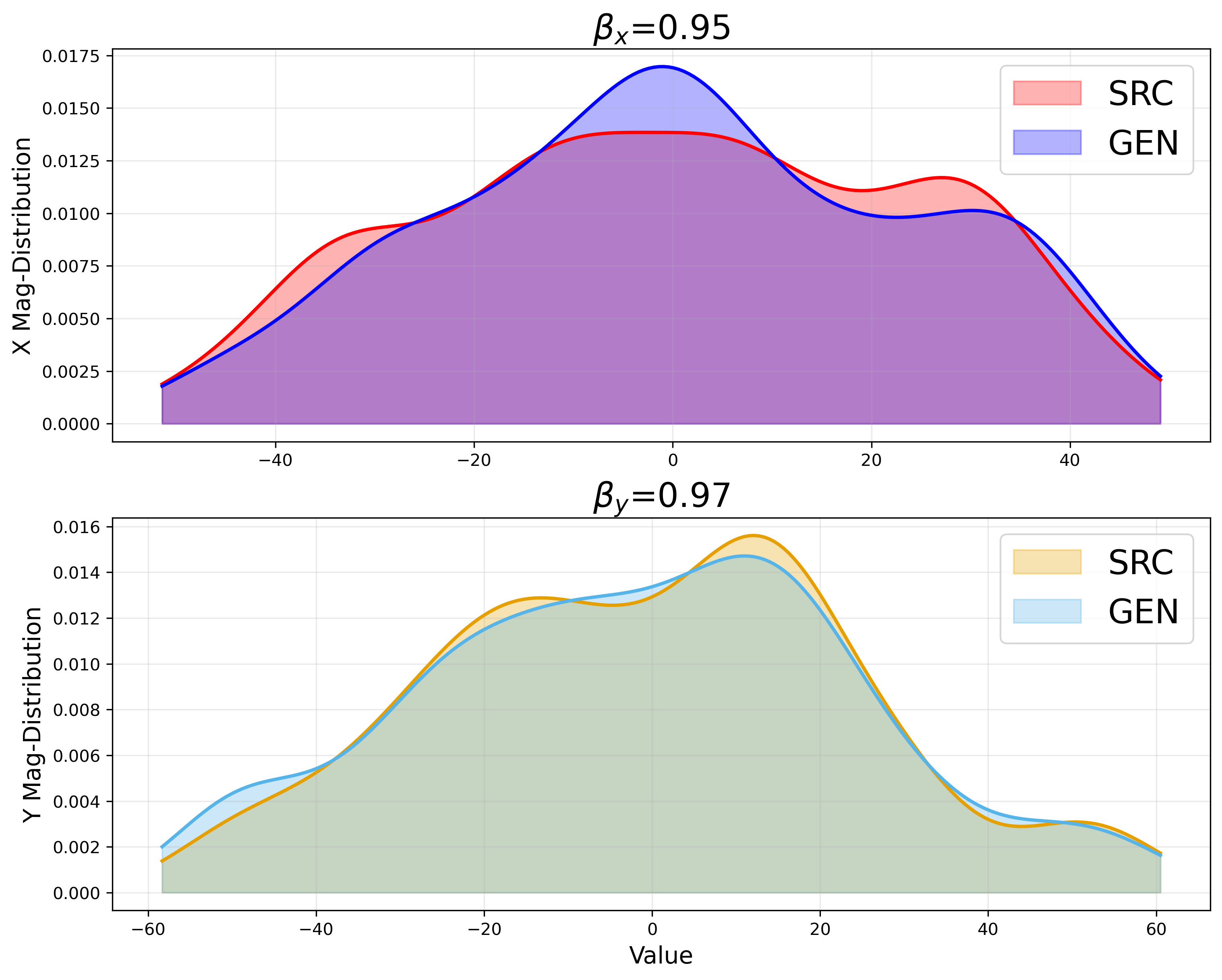}
    \caption{\revise{before-edit}}
    \label{fig:subfig:t1}
\end{subfigure}
\hfill
\begin{subfigure}{0.32\linewidth}
    \centering
    \includegraphics[width=\linewidth]{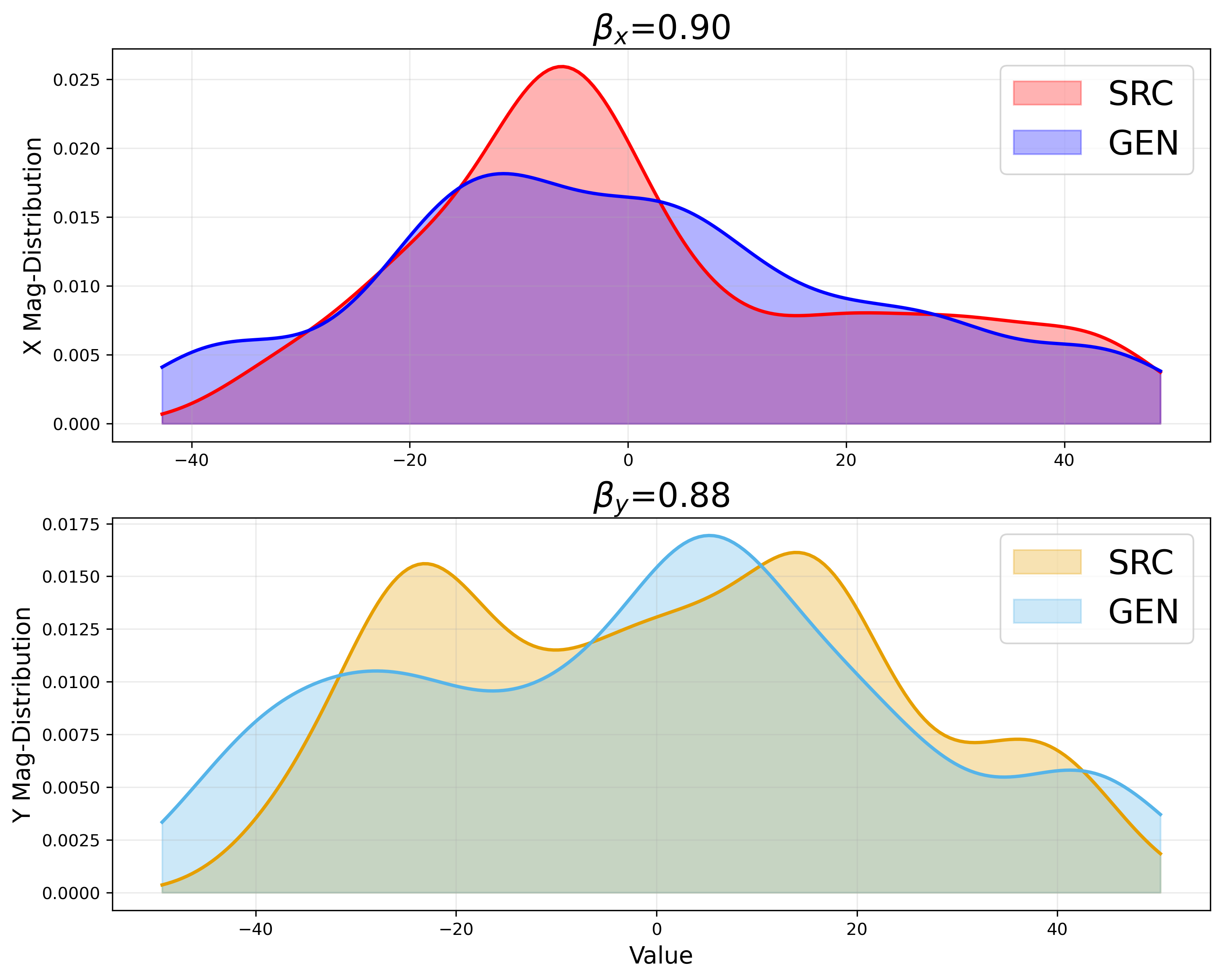}
    \caption{\revise{in editing}}
    \label{fig:subfig:t2}
\end{subfigure}
\hfill
\begin{subfigure}{0.32\linewidth}
    \centering
    \includegraphics[width=\linewidth]{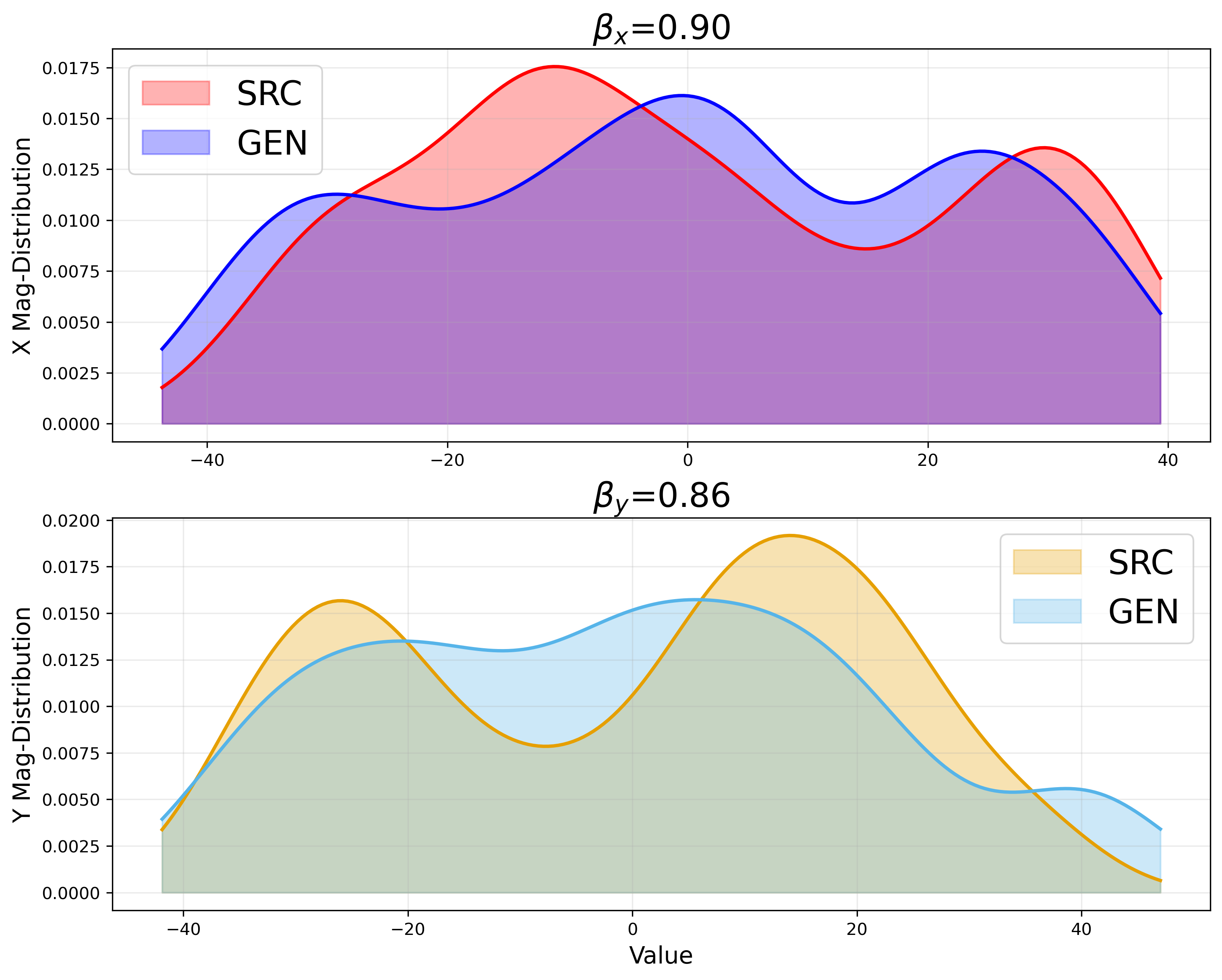}
    \caption{\revise{converged}}
    \label{fig:subfig:t3}
\end{subfigure}

\begin{subfigure}{0.32\linewidth}
    \centering
    \includegraphics[width=\linewidth]{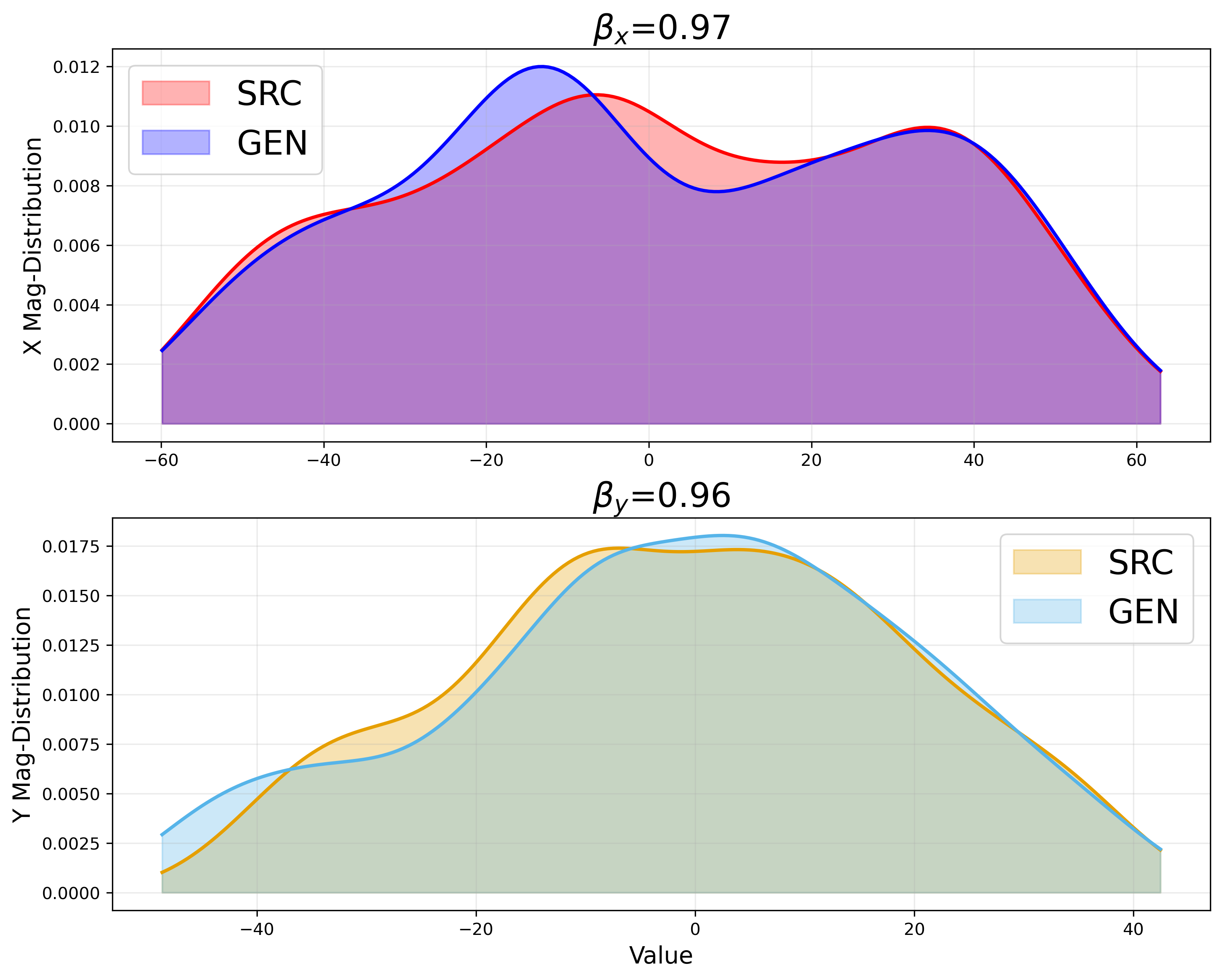}
    \caption{\revise{before-edit}}
    \label{fig:subfig:t4}
\end{subfigure}
\hfill
\begin{subfigure}{0.32\linewidth}
    \centering
    \includegraphics[width=\linewidth]{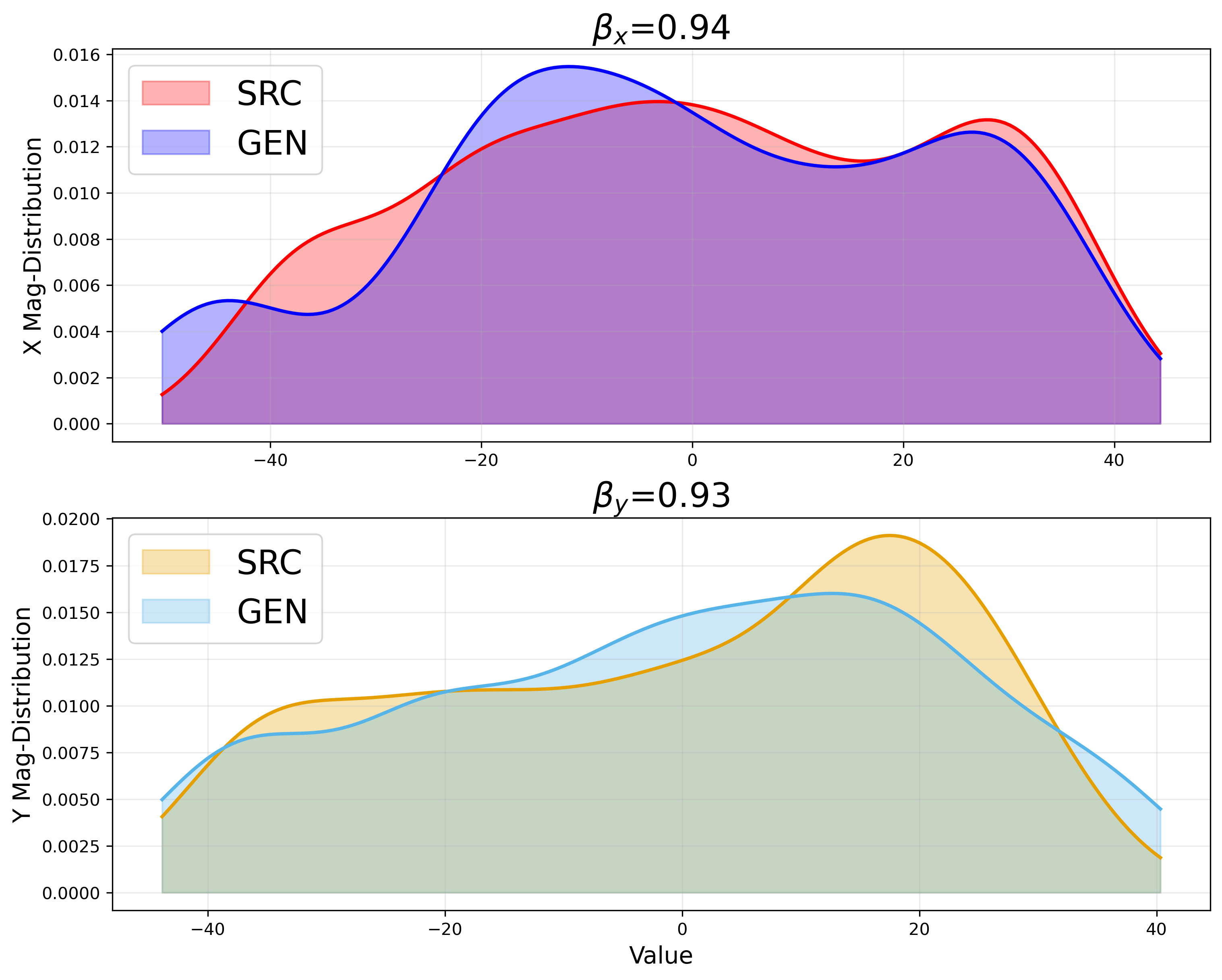}
    \caption{\revise{in editing}}
    \label{fig:subfig:t5}
\end{subfigure}
\hfill
\begin{subfigure}{0.32\linewidth}
    \centering
    \includegraphics[width=\linewidth]{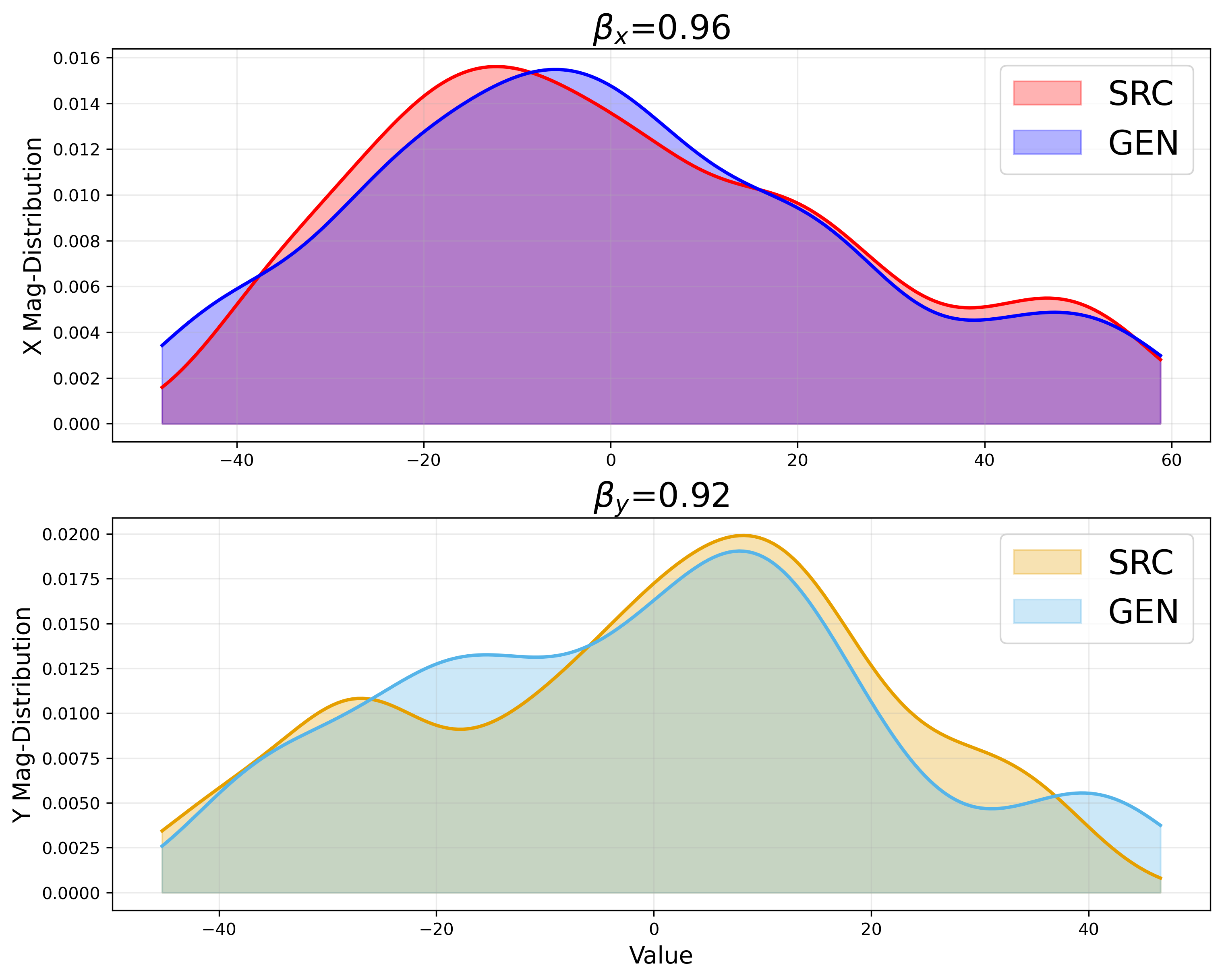}
    \caption{\revise{converged}}
    \label{fig:subfig:t6}
\end{subfigure}

\caption{\revise{The t-SNE distributions of the latent representations on original prompts (SRC) and rephrase prompts (GEN) in MLLM. The curves depict the marginal distributions along the two dimensions, with $\beta_x$ and $\beta_y$ representing the proportion of the overlap. (a)-(c) denotes the MEND, (d)-(f) denotes the MEND+\modelname.}}
\label{fig:kde_new}
\vspace{-0.3in}
\end{figure}

\nosection{Effects of Edit Trajectory Invariant Learning}
We ablate the effect of the TV-$\ell_1$ penalty strength ($\lambda$) in IRM-TV optimization, and present results in Figure \ref{fig:ablation_ood}, from which we find:
1) An insufficient penalty, \ie $\lambda=0.0001$, fails to extract feature invariance, leading to suboptimal performance.
2) An excessive penalty, \ie $\lambda=0.01$, over-constrains the model, simultaneously degrading three metrics.
%
3) The dynamic and adaptive formulation of $\lambda(\pi, \phi_e)$ shows its superiority, validating it robustly balances knowledge assimilation and discrimination across diverse environments.

\nosection{Visualization on OOD Generalization}
\revise{
We visualize the latent representations of original and rephrased prompts in MLLM with t-SNE \citep{van2008visualizing} across different editing stages.
From Figure \ref{fig:kde_new}:
1) Before editing, rephrased prompts align with original prompt distributions in the pre-trained MLLM.
2) During editing, MEND induces a marked distribution shift as $\beta_x$ and $\beta_y$ values drop, fails to extract semantic invariance. But MEND+\modelname~maintains strong alignment with high $\beta$ values, showing stable trajectory learning.
3) At convergence, the distribution shift in MEND persists while \modelname~sustains robust alignment, proving generalization to semantic-neighboring regions.
}

\nosection{Hyperparameter Sensitivity}
We study effects of the learning rate and layer depth in the IRM-TV network.
From Figure \ref{fig:para_sen}:
1) A small learning rate hinders extraction of invariant features, while a moderate increase enhances generality, accompanied by a slight sacrifice in locality.
However, an excessively large rate suppresses overall performance.
2) Deeper networks facilitate diverse cross-modal association learning, but the marginal benefit diminishes once layer depth reaches a certain level.
%

\begin{figure}
\vspace{-0.05in}
    \centering
    
    \begin{subfigure}{0.49\linewidth}
        \centering
        \includegraphics[width=0.9\linewidth]{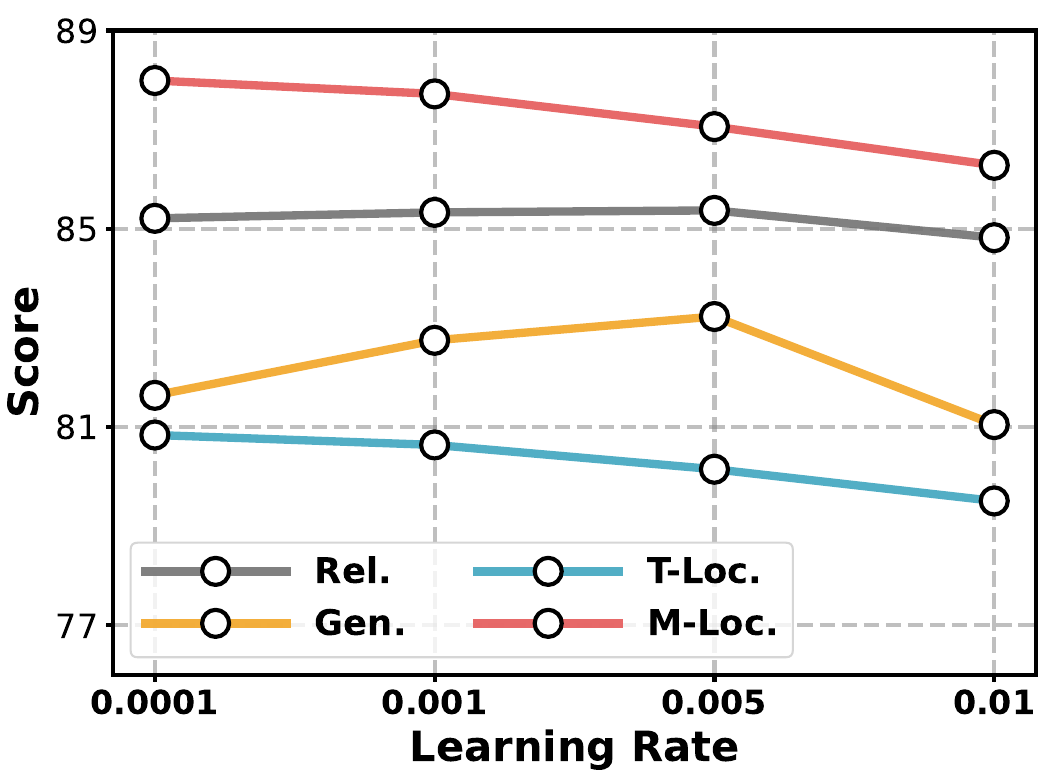}
        \label{fig:subfig:ab-1}
    \end{subfigure}
    \hfill
    \begin{subfigure}{0.49\linewidth}
        \centering
        \includegraphics[width=0.9\linewidth]{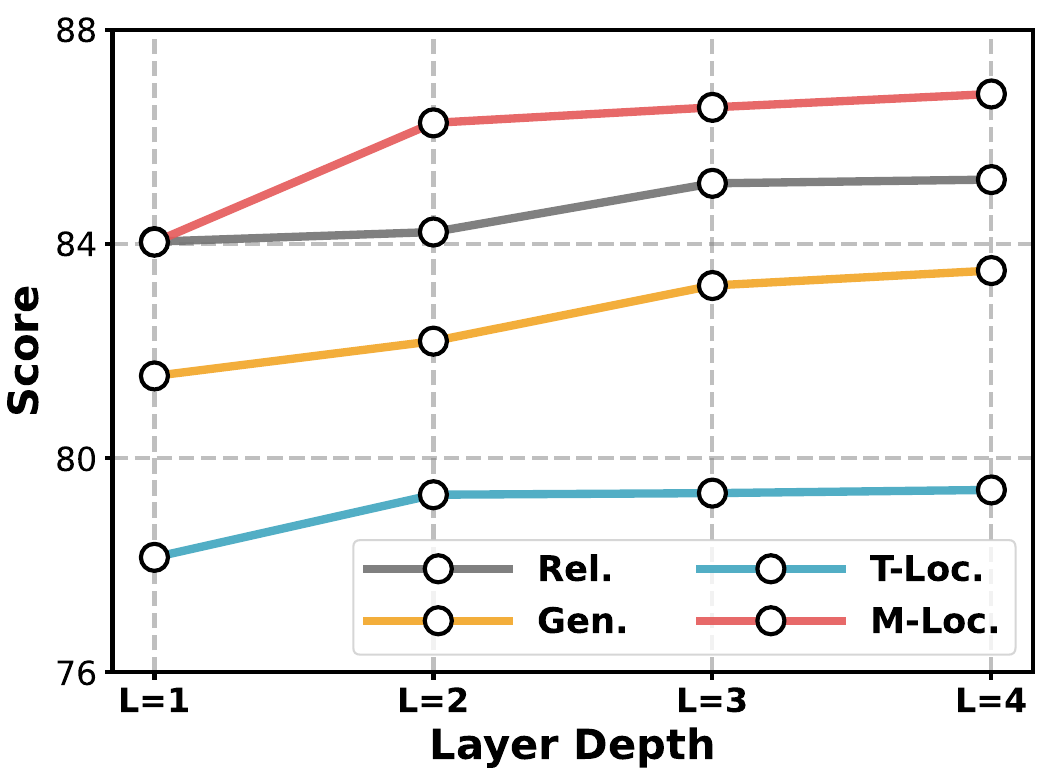}
        \label{fig:subfig:ab-2}
    \end{subfigure}
    \vspace{-0.05in}
    \caption{Effects of learning rate and layer depth.}
    \label{fig:para_sen}
\vspace{-0.3in}
\end{figure}

\nosection{Computational Cost}
\revise{We pick one typical parameter-adjusting (MEND) and model-extending (WISE) baselines for comparison.
From \ref{tab:compute-cost}:
1) \modelname~introduces a supplementary network that processes parameters from the knowledge-editing layer, leading to increased memory usage. But this is the reasonable trade-off for the gains in performance and is acceptable given the current state of computational resources.
2) \modelname~does not incur a significant increase in time cost, indicating its efficiency.
3) While integrating \modelname~increases steps, the resultant increase in total time cost does not constitute an order-of-magnitude change and remains within a practical range for real-world deployment.
Employing higher-performance computing resources can reduce this training time gap.
}

\section{Conclusion and Future Work}
In this work, we rethink knowledge editing in MLLM as an OOD generalization problem.
To identify semantic-shift and factual-shift among cross-modal prompting environments, we propose a plug-and-play invariant learning based optimization paradigm with tripartite OOD risks to jointly enhance editing reliability, locality, and generality.
This work marks an initial step in solving multimodal editing from an OOD perspective, for which we introduce simple yet general editing invariant risks with a pathway to guide robust model adaptation.
In future, researchers could investigate advanced strategies to strengthen MLLM's grasp of invariant trajectories and discern spurious factors, with refined regularization functions for more robust editing.



\nocite{langley00}

\bibliography{example_paper}
\bibliographystyle{icml2026}

\newpage
\appendix
\onecolumn
\section{Proofs}
\subsection{Equivalence between OOD-w and IRM Formulation} \label{app:ood-w}
In this section, we provide a detailed proof of the equivalence between the original out-of-distribution (OOD) editing objective and its reformulation using an environment-aware classifier $\omega$.
Specifically, we aim to show that for any edited model parameters $\phi_e$, the worst-case risk over all environments $e\in \mathcal{E}$ can be equivalently expressed as the worst-case risk over all possible classifiers $\omega \in \varSigma$, under the condition that $\omega$ captures the environmental variability through a surjective mapping.

\textbf{Proof.}  
To establish the equality, we demonstrate two inequalities.
First, we prove that 
\[
\max_{e \in \mathcal{E}} \mathcal{R}_{\text{edit}}(\omega(e) \circ \phi_e) \geq \max_{\omega \in \varSigma} \mathcal{R}_{\text{edit}}(\omega \circ \phi_e).
\]  
Let $\omega^*$ be a classifier that attains the maximum on the right-hand side, so that  
\[
\omega^* = \arg \max_{\omega \in \varSigma} \mathcal{R}_{\text{edit}}(\omega \circ \phi_e).
\]  
Given the surjectivity of the mapping $e \mapsto \omega(e)$, there exists an environment $e_0 \in \mathcal{E}$ such that $\omega(e_0) = \omega^*$. Consequently,  
\[
\mathcal{R}_{\text{edit}}(\omega(e_0) \circ \phi_e) = \mathcal{R}_{\text{edit}}(\omega^* \circ \phi_e) = \max_{\omega \in \varSigma} \mathcal{R}_{\text{edit}}(\omega \circ \phi_e).
\]  
Since $e_0$ is an element of $\mathcal{E}$, the maximum over $\mathcal{E}$ must be at least as large as the value at $e_0$, yielding the desired inequality.
Second, we prove the opposite inequality:
\[
\max_{e \in \mathcal{E}} \mathcal{R}_{\text{edit}}(\omega(e) \circ \phi_e) \leq \max_{\omega \in \varSigma} \mathcal{R}_{\text{edit}}(\omega \circ \phi_e).
\]  
Let $e^*$ be an environment that achieves the maximum on the left-hand side, i.e.,  
\[
e^* = \arg \max_{e \in \mathcal{E}} \mathcal{R}_{\text{edit}}(\omega(e) \circ \phi_e).
\]  
Then, $\omega(e^*)$ is by construction a member of $\varSigma$. Therefore,  
\[
\mathcal{R}_{\text{edit}}(\omega(e^*) \circ \phi_e) \leq \max_{\omega \in \varSigma} \mathcal{R}_{\text{edit}}(\omega \circ \phi_e),
\]  
which directly implies the inequality. By combining both inequalities, we conclude that  
\[
\max_{e \in \mathcal{E}} \mathcal{R}_{\text{edit}}(\omega(e) \circ \phi_e) = \max_{\omega \in \varSigma} \mathcal{R}_{\text{edit}}(\omega \circ \phi_e),
\]  
which holds for any $\phi_e$. Thus, minimizing either expression with respect to $\phi_e$ leads to the same optimal solution, confirming the equivalence between the OOD-$\omega$ and IRM formulations. This result allows us to leverage the IRM framework for invariant learning in multimodal knowledge editing.

\subsection{How IRM with TV-$l_1$ Penalty Achieves Editing OOD} \label{app:irm-tv-ood}

In this section, we provide a theoretical analysis demonstrating how the proposed IRM formulation with TV-$\ell_1$ penalty achieves the out-of-distribution (OOD) editing objective defined in Eq.\ref{eq:tv-ood} of the main text. Specifically, we prove that when the penalty parameter $\lambda_{\phi_e}$ is allowed to vary with the model parameters $\phi_e$, the IRM-TV objective can achieve the same optimum as the original OOD editing objective.
Recall the IRM-TV formulation from Eq.\ref{eq:tv-ood}:
\[
\min_{\phi_e} \left\{ \mathbb{E}_{\omega}[\mathcal{R}_{\text{rel}}(\omega\circ\phi_e) + \mathcal{R}_{\text{loc}}(\omega\circ\phi_e)] + \lambda_{\phi_e} \left( \mathbb{E}_{\omega}[|\nabla_{\omega}\mathcal{R}_{\text{gen}}(\omega\circ\phi_e)|] \right)^2 \right\},
\]
where the first term represents the base editing risk (reliability + locality) and the second term is the TV-$\ell_1$ penalty on the generalization risk. The original OOD editing objective is:
\[
\min_{\phi_e} \max_{\omega \in \varSigma} \mathcal{R}_{\text{edit}}(\omega\circ\phi_e).
\]

We first demonstrate through a counterexample that a fixed $\lambda$ cannot achieve the OOD objective, then prove the existence of a $\lambda_{\phi_e}$ that varies with $\phi_e$ to achieve equivalence.

\subsubsection{ The Necessity of $\lambda$ Varying with $\phi_e$ }

Following \citep{lai2024invariant}, we provide a counterexample with a fixed $\lambda$ to prove the necessity of $\lambda$ Varying with $\phi_e$.
To show that a fixed $\lambda$ is insufficient, consider a simplified editing scenario where we aim to optimize the feature parameter $\phi_e \in [-1, 1]$. The classifier $\omega$ follows a uniform distribution on $[-0.9, 0.1]$, reflecting environmental variations. The editing risk is defined as:
\[
\mathcal{R}_{\text{edit}}(\omega\circ\phi_e) := |\omega \cdot \phi_e + 1|.
\]

For this setup, the OOD objective achieves its minimum at $\phi_e = 0$ with value 1:
\begin{equation}
\begin{aligned}
\min_{\phi_e \in [-1,1]} \max_{\omega \in [-0.9,0.1]} \mathcal{R}_{\text{edit}}(\omega\circ\phi_e) = 1, \\
\arg\min_{\phi_e \in [-1,1]} \max_{\omega \in [-0.9,0.1]} \mathcal{R}_{\text{edit}}(\omega\circ\phi_e) = 0.
\nonumber
\end{aligned}
\end{equation}

However, for any fixed $\lambda \geq 0$, the IRM-TV objective:
\[
\mathbb{E}_{\omega}[\mathcal{R}_{\text{edit}}(\omega\circ\phi_e)] + \lambda \left( \mathbb{E}_{\omega}[|\nabla_{\omega}\mathcal{R}_{\text{edit}}(\omega\circ\phi_e)|] \right)^2,
\]
fails to achieve the same optimum as the OOD objective. To demonstrate this, we analyze the behavior of both objectives for the simplified editing scenario.
For $\phi_e \geq 0$, the OOD objective becomes $\max_{\omega \in [-0.9,0.1]} \mathcal{R}_{\text{edit}}(\omega\circ\phi_e) = 1 + 0.1\phi_e$,
which is minimized at $\phi_e = 0$ with value 1.
For $\phi_e < 0$, the OOD objective becomes 
$\max_{\omega \in [-0.9,0.1]} \mathcal{R}_{\text{edit}}(\omega\circ\phi_e) = 1 - 0.9\phi_e$, which is also minimized at $\phi_e = 0$ with value 1.
Now, evaluating the IRM-TV objective with fixed $\lambda$:
\[
\mathbb{E}_{\omega}[\mathcal{R}_{\text{edit}}(\omega\circ\phi_e)] = \int_{-0.9}^{0.1} (1 + \omega\phi_e) d\nu = 1 + \phi_e \cdot \mathbb{E}_{\omega}[\omega] = 1 - 0.4\phi_e,
\]
\[
\mathbb{E}_{\omega}[|\nabla_{\omega}\mathcal{R}_{\text{edit}}(\omega\circ\phi_e)|] = |\phi_e| \cdot \int_{-0.9}^{0.1} d\nu = |\phi_e|.
\]

Thus, the IRM-TV objective becomes $1 - 0.4\phi_e + \lambda\phi_e^2.$
Minimizing this quadratic function over $\phi_e \in [-1,1]$ yields:
1) If $\lambda > 0.2$, the minimum occurs at $\phi_e = 0.2/\lambda$ with value $1 - 0.04/\lambda$.
2) If $0 < \lambda \leq 0.2$, the minimum occurs at $\phi_e = 1$ with value $0.6 + \lambda$.
3) If $\lambda = 0$, the minimum occurs at $\phi_e = 1$ with value $0.6$.
Comparing with the OOD optimum ($\phi_e = 0$, value 1), for any fixed $\lambda \geq 0$ we observe that:
\[
\min_{\phi_e \in [-1,1]} \left\{1 - 0.4\phi_e + \lambda\phi_e^2\right\} \neq 1,
\]
\[
\arg\min_{\phi_e \in [-1,1]} \left\{1 - 0.4\phi_e + \lambda\phi_e^2\right\} \neq 0.
\]

This deviation occurs because the expectation term $\mathbb{E}_{\omega}[\mathcal{R}_{\text{edit}}(\omega\circ\phi_e)]$ pulls the optimum away from $\phi_e = 0$ to reduce the average risk, while the fixed $\lambda$ cannot adequately compensate for this bias. Only when $\lambda$ is allowed to vary with $\phi_e$ can we achieve equivalence with the OOD objective.

\subsubsection{Proofs on Existence of $\lambda_{\phi_e}$}

We now prove that there exists a $\lambda_{\phi_e}$ that varies with $\phi_e$ such that the IRM-TV objective equals the OOD objective for each $\phi_e$.
For the case where $\mathbb{E}_{\omega}[|\nabla_{\omega}\mathcal{R}_{\text{edit}}(\omega\circ\phi_e)|] = 0$, indicating constant generalization risk, $\lambda_{\phi_e}$ can be chosen arbitrarily since the TV term vanishes.
For the nontrivial case where $\mathbb{E}_{\omega}[|\nabla_{\omega}\mathcal{R}_{\text{gen}}(\omega\circ\phi_e)|] > 0$, we construct $\lambda_{\phi_e}$ as:
\[
\lambda_{\phi_e} := \frac{ \max_{\omega \in \varSigma} \mathcal{R}_{\text{edit}}(\omega\circ\phi_e) - \mathbb{E}_{\omega}[\mathcal{R}_{\text{rel}}(\omega\circ\phi_e) + \mathcal{R}_{\text{loc}}(\omega\circ\phi_e)+\mathcal{R}_{\text{gen}}(\omega\circ\phi_e)] }{ \left( \mathbb{E}_{\omega}[|\nabla_{\omega}\mathcal{R}_{\text{edit}}(\omega\circ\phi_e)|] \right)^2 }.
\]

This construction ensures that for each $\phi_e$, we have
\[
\mathbb{E}_{\omega}[\mathcal{R}_{\text{edit}}(\omega\circ\phi_e)] + \lambda_{\phi_e} \left( \mathbb{E}_{\omega}[|\nabla_{\omega}\mathcal{R}_{\text{edit}}(\omega\circ\phi_e)|] \right)^2 = \max_{\omega \in \varSigma} \mathcal{R}_{\text{edit}}(\omega\circ\phi_e),
\]
since the numerator represents the gap between the worst-case risk and the expected base risk.

\subsubsection{Achieving OOD-$\omega$ Optimality}

Let $\phi_e^*$ be an optimal solution of the IRM-TV objective with $\lambda_{\phi_e}$ defined above. Then for any $\phi_e$:
\begin{equation}
\begin{aligned}
\mathbb{E}_{\omega}[\mathcal{R}_{\text{rel}}(\omega\circ\phi_e^*) + \mathcal{R}_{\text{loc}}(\omega\circ\phi_e^*)+\mathcal{R}_{\text{gen}}(\omega\circ\phi_e^*)] + \lambda_{\phi_e^*} \left( \mathbb{E}_{\omega}[|\nabla_{\omega}\mathcal{R}_{\text{edit}}(\omega\circ\phi_e^*)|] \right)^2  \\
\leq \mathbb{E}_{\omega}[\mathcal{R}_{\text{rel}}(\omega\circ\phi_e) + \mathcal{R}_{\text{loc}}(\omega\circ\phi_e)+\mathcal{R}_{\text{gen}}(\omega\circ\phi_e)] + \lambda_{\phi_e} \left( \mathbb{E}_{\omega}[|\nabla_{\omega}\mathcal{R}_{\text{edit}}(\omega\circ\phi_e)|] \right)^2.
\end{aligned}
\nonumber
\end{equation}

Substituting the definition of $\lambda_{\phi_e}$, for all $\phi_e$ we have:
\[
\max_{\omega \in \varSigma} \mathcal{R}_{\text{edit}}(\omega\circ\phi_e^*) \leq \max_{\omega \in \varSigma} \mathcal{R}_{\text{edit}}(\omega\circ\phi_e).
\]
This is the proof that $\phi_e^*$ is also optimal for the OOD objective.
Conversely, if $\phi_e^*$ is optimal for the OOD objective, then for all $\phi_e$:
\[
\mathbb{E}_{\omega}[\mathcal{R}_{\text{edit}}(\omega\circ\phi_e^*) ] + \lambda_{\phi_e^*} \left( \mathbb{E}_{\omega}[|\nabla_{\omega}\mathcal{R}_{\text{edit}}(\omega\circ\phi_e^*)|] \right)^2 = \max_{\omega \in \varSigma} \mathcal{R}_{\text{edit}}(\omega\circ\phi_e^*) \leq \max_{\omega \in \varSigma} \mathcal{R}_{\text{edit}}(\omega\circ\phi_e),
\]
which shows that $\phi_e^*$ is also optimal for the IRM-TV objective.
This completes the proof that the IRM formulation with TV-$\ell_1$ penalty can achieve the OOD editing objective when $\lambda_{\phi_e}$ is properly chosen as a function of $\phi_e$.

\subsection{Computation of Gradients in Primal-dual Optimization} \label{app:gradients}

In this section, we provide the detailed computation process of the gradient $\nabla_{\delta} \mathcal{G}$ and the subgradient $\partial_{\phi_e} \mathcal{G}$ for the primal-dual optimization problem defined in Eq. \ref{eq:13} and \ref{eq:14} of the main text. Recall the Lagrangian function as:
\[
\mathcal{G}(\delta, \phi_e) = \mathbb{E}_{\omega}[\mathcal{R}_{\text{edit}}(\omega \circ \phi_e)] + \lambda(\delta, \phi_e) \left( \mathbb{E}_{\omega}[|\nabla_{\omega} \mathcal{R}_{\text{edit}}(\omega \circ \phi_e)|] \right)^2,
\]

where $\mathcal{R}_{\text{edit}}(\omega \circ \phi_e) = \mathcal{R}_{\text{rel}}(\omega \circ \phi_e) + \mathcal{R}_{\text{loc}}(\omega \circ \phi_e) + \mathcal{R}_{\text{gen}}(\omega \circ \phi_e)$ represents the complete editing risk. To compute the gradients, we assume that the risk functions are Lipschitz continuous and admit subgradients at non-differentiable points.

\nosection{Subgradient of $\mathcal{G}$ with Respect to $\phi_e$}
The subgradient $\partial_{\phi_e} \mathcal{G}(\delta, \phi_e)$ is computed as:
\begin{equation}
\begin{aligned}
\partial_{\phi_e} \mathcal{G}(\delta, \phi_e) = &\mathbb{E}_{\omega}[\nabla_{\phi_e} \mathcal{R}_{\text{edit}}(\omega \circ \phi_e)] + 2 \lambda(\delta, \phi_e) \cdot \mathbb{E}_{\omega}[|\nabla_{\omega} \mathcal{R}_{\text{edit}}(\omega \circ \phi_e)|] \\ &\cdot \mathbb{E}_{\omega}[\partial_{\phi_e} |\nabla_{\omega} \mathcal{R}_{\text{edit}}(\omega \circ \phi_e)|] 
+ \nabla_{\phi_e} \lambda(\delta, \phi_e) \cdot \left( \mathbb{E}_{\omega}[|\nabla_{\omega} \mathcal{R}_{\text{edit}}(\omega \circ \phi_e)|] \right)^2.
\end{aligned}
\nonumber
\end{equation}

Here, the term $\partial_{\phi_e} |\nabla_{\omega} \mathcal{R}_{\text{edit}}(\omega \circ \phi_e)|$ requires special handling due to the absolute value function. Based on derivations in \citep{wang2025out}, we obtain its subgradient as:

\[
\partial_{\phi_e} |\nabla_{\omega} \mathcal{R}_{\text{edit}}(\omega \circ \phi_e)| = 
\begin{cases} 
\text{sign}(\nabla_{\omega} \mathcal{R}_{\text{edit}}(\omega \circ \phi_e)) J_{\phi_e}^{-1}\left[\nabla_{\omega} \mathcal{R}_{\text{edit}}(\omega \circ \phi_e)\right]
& \text{if } \nabla_{\omega} \mathcal{R}_{\text{edit}}(\omega \circ \phi_e) \neq 0, \\
0 & \text{if } \nabla_{\omega} \mathcal{R}_{\text{edit}}(\omega \circ \phi_e) = 0,
\end{cases}
\]

where $J_{\phi_e}[\cdot]$ denotes the Jacobian matrix with respect to $\phi_e$. This formulation ensures that the subgradient is well-defined even at points where the gradient is zero.

\nosection{Gradient of $\mathcal{G}$ with Respect to $\delta$}
The gradient $\nabla_{\delta} \mathcal{G}(\delta, \phi_e)$ is computed as:
\[
\nabla_{\delta} \mathcal{G}(\delta, \phi_e) = \nabla_{\delta} \lambda(\delta, \phi_e) \cdot \left( \mathbb{E}_{\omega}[|\nabla_{\omega} \mathcal{R}_{\text{edit}}(\omega \circ \phi_e)|] \right)^2.
\]

The first term in $\mathcal{G}$, $\mathbb{E}_{\omega}[\mathcal{R}_{\text{edit}}(\omega \circ \phi_e)]$, does not depend on $\delta$, so its gradient with respect to $\delta$ is zero.

\nosection{Implementation Notes}
In practice, the expectations over $\omega$ are approximated using Monte Carlo sampling from the environment distribution. The gradients $\nabla_{\phi_e} \mathcal{R}_{\text{edit}}$, $\nabla_{\omega} \mathcal{R}_{\text{edit}}$, and $\nabla_{\delta} \lambda$ are computed using standard backpropagation. The subgradient for the absolute value term is implemented using a conditional statement, which is supported by autograd systems. This approach ensures efficient and stable optimization during the primal-dual updates.

These gradient computations enable the iterative updates in Eq. \ref{eq:15} of the main text:

\[
\phi_e^{(k+1)} = \phi_e^{(k)} - \gamma_1^{(k)} \cdot \partial_{\phi_e} \mathcal{G}(\delta^{(k)}, \phi_e^{(k)}), \quad \delta^{(k+1)} = \delta^{(k)} + \gamma_2^{(k)} \cdot \nabla_{\delta} \mathcal{G}(\delta^{(k)}, \phi_e^{(k+1)}),
\]

leading to convergence to a solution that minimizes the OOD editing risk while maintaining the invariance properties enforced by the TV-$\ell_1$ penalty.

\section{Causal grounding analysis of cascaded reasoning in MLLM}
\subsection{Architectural Causal Structure Embedded in MLLM}
\revise{
Multimodal language models implement an unidirectional computational graph, that is: unimodal encoders → cross-modal fusion → unified semantic reasoning. This forward computation defines a structural causal ordering, for which in the  Structural Causal Model (SCM) view \citep{li2024multimodal,zhou2024mitigating}, modules are equal to variables and the forward pass is equal to structural equations.
Thus the cascade reasoning is not a hypothesized causal model, but the deterministic functional decomposition of existing architectures.
}

\subsection{How Perturbations Propagate During MLLM Editing}
\revise{
Under this structural ordering, any local perturbation to a module $\Delta M$ or parameter $\Delta W$ necessarily propagates forward through the downstream modules and changes their internal states. Thus,  there is no one-to-one rigid mapping, \ie rigid mapping, between a specific parameter edit and the final output change, because the effect of the edit is mediated by all subsequent causal mechanisms in the network. Formally, for a structural chain
$$
h^{(unimodal)} \rightarrow h^{(align)} \rightarrow h^{(shared)} \rightarrow y
$$
A perturbation enters the output through
$$
y'=f_{shared}\left(f_{align}\left(f_{unimodal}(x ; W+\delta W)\right)\right)
$$
Thus the output shift $\Delta y$ depends not only on $\Delta W$ , but on how $\Delta W$  perturbs $h_{unimodal}$ , how this shifted representation perturbs $h_{align}$, and subsequently how the changed alignment influences the semantic reasoning module $h_{shared}$. This cascading mediation proves why treating \textit{parameter edit → output change} as a rigid mapping is fundamentally inaccurate in MLLMs, \ie cause \textit{casual underfit} and \textit{casual overfit} in Section Introduction.
}

\section{Definitions of Semantic Shift and Factual Shift} \label{app:shiftdefine}
The definitions of \textit{Semantic Shift} and \textit{Factual Shift} rely on three shared mappings:

\nosection{Semantic neighborhood}
 Let $f(x)$ be the MLLM’s semantic embedding. We define meaning-preserving variation via the semantic neighborhood:
$$
\mathcal{N}_{\varepsilon}(x)=\left\{x^{\prime}:\left\|f\left(x^{\prime}\right)-f(x)\right\|_2 \leq \varepsilon\right\} .
$$
\nosection{Atomic factual content}
Let $k(x)$ denote the atomic factual content (e.g., entity–attribute or entity–relation tuples). Two inputs share factual content iff $k(x)=k(x')$.

\nosection{Output-relevant concept mapping}
Let $c(x)$ denote the minimal set of conceptual factors that feed into the MLLM’s forward causal chain (perception → alignment → semantic reasoning) and determine the final output:
$$
y = MLLM(c(x)).
$$

\revise{
\begin{definition}[Semantic Shift]
A sample $x'$ exhibits semantic shift w.r.t. $x$ if and only if
$$
x^{\prime} \in \mathcal{N}_{\varepsilon}(x), \quad k\left(x^{\prime}\right)=k(x), \quad c(x) \cap c\left(x^{\prime}\right) \neq \varnothing, \quad \operatorname{MLLM}\left(c\left(x^{\prime}\right)\right)=\operatorname{MLLM}(c(x))
$$
\end{definition}
That is, semantic shift refers to variations within the semantic neighborhood while preserving factual content and preserving the output-relevant conceptual factors. Typical examples include paraphrases, lexical substitutions, stylistic rewordings, and mild visual variations.
}

\revise{
\begin{definition}[Factual Shift]
To be rigorous, there should be two kinds of factual shift, i.e., easy factual shift and hard factual shift:
$$
\text{Easy Factual Shift:} \quad x^{\prime} \notin \mathcal{N}_{\varepsilon}(x),  k\left(x^{\prime}\right) \neq k(x),  c(x) \cap c\left(x^{\prime}\right) = \varnothing,  \operatorname{MLLM}\left(c\left(x^{\prime}\right)\right) \neq \operatorname{MLLM}(c(x))$$
$$
\text{Hard Factual Shift:} \quad x^{\prime} \notin \mathcal{N}_{\varepsilon}(x),  k\left(x^{\prime}\right) \neq k(x),  c(x) \cap c\left(x^{\prime}\right) \neq \varnothing, \operatorname{MLLM}\left(c\left(x^{\prime}\right)\right) \neq \operatorname{MLLM}(c(x)) 
$$
\end{definition}
Thus, the factual shift corresponds to moving outside the semantic neighborhood while altering the atomic fact, which necessarily changes the model’s reasoning-relevant conceptual representation. The only difference between the two factual shifts is whether the prompts share part of the conceptual framing, \eg the same entities, question structure, or visual context.
}

\section{Experimental Setup Details}\label{app:implement}

\subsection{MLLM Backbones}

\nosection{BLIP2-OPT}
\cite{li2023blip} is a vision-language pre-training framework that leverages frozen pre-trained image encoders and large language models bridged by a lightweight Querying Transformer. Our setup uses ViT-L for the vision encoder and an unsupervised-trained OPT model with 2.7 billion parameters as the decoder-based language model.

\nosection{MiniGPT-4} 
\cite{zhu2023minigpt} is a vision-language model that integrates a frozen visual encoder with the frozen Vicuna language model built on LLaMA. The model employs a single projection layer to align visual features with Vicuna and uses the same pre-trained vision component as BLIP-2, consisting of ViT-G/14 from EVA-CLIP and a Q-Former. Our setup uses ViT-G/14 for the vision encoder and a forzen Vicuna model with 7 billion parameters as the decoder-based language model.

\subsection{Dataset Structures}
\revise{
The reliance of \modelname~on three distinct data splits ($\mathcal{D}_{IN}$, $\mathcal{D}_{SE}$,  $\mathcal{D}_{out}$) is not a new imposition but rather a formalization of the training datasets from benchmark MMEdit \citep{cheng2023can}, which is also the most popularly used benchmark in previous work \citep{pan2024towards}.
The MMEdit benchmark that we use explicitly provides data structured as triplets for each edit instance in the training datasets, i.e., the original edit sample (our $\mathcal{D}_{IN}$), semantically rephrase samples (our $\mathcal{D}_{SE}$), and unrelated samples (our $\mathcal{D}_{out}$).
For clarity, here we provide the data structure of a training instance example:
}
\begin{promptbox}
\textbf{src:} A photo of\\
\textbf{pred:} Wooden spoons and forks on a wooden table.\\
\textbf{rephrase:} Provide a brief overview of the image content.\\
\textbf{alt:} A selection of wooden kitchen tools on a counter.\\
\textbf{image:} val2014/COCO\_val2014\_000000386164.jpg\\
\textbf{image rephrase:} val2014\_image\_rephrase/COCO\_val2014\_000000386164.png\\
\textbf{loc:} Who was supported by the united states during mexican civil war?\\
\textbf{loc ans:} Benito Juárez.\\
\textbf{m\_loc:} val2014/COCO\_val2014\_000000297147.jpg\\
\textbf{m\_loc\_q:} What sport can you use this for?\\
\textbf{m\_loc\_a:} Motocross.
\end{promptbox}

\subsection{Baseline Methods} \label{app:baselines}
To thoroughly evaluate the effectiveness of our model \modelname, we compare it with four types of baselines:
(1) \textit{Naive fine-tuning}: FT directly tunes the last three layers of MLLM.
(2) \textit{Parameter-adjusting unimodal editing}: MEND \citep{mitchell2021fast}.
(3) \textit{Model-extending unimodal editing}: IKE \citep{zheng2023can}, SERAC \cite{mitchell2022memory}, T-Patcher \citep{huang2023transformer}, WISE \citep{wang2024wise}.
(4) \textit{Integrate parameter-adjusting and model-extending editing}: UniKE \citep{pan2024towards}.
\modelname~serves as a plug-and-play universal framework, capable of being seamlessly integrated into any editing model that relies on loss-based optimization.
Thus, we enhance one representative model under each type of baselines using \modelname, \ie WISE+\modelname, MEND+\modelname, T-Patcher+\modelname, UniKE+\modelname, and compare the results against the original models.

\nosection{Fine-tune (FT)}
Fine-tuning is the predominant paradigm for adapting pre-trained models to downstream tasks. As our baseline for multimodal editing, we adopt vanilla fine-tuning by updating the last three layers of the MLLM.

\nosection{In-context Knowledge Editing (IKE)}
\cite{zheng2023can} explores in-context learning (ICL) for knowledge editing in large language models. IKE designs demonstration templates, \ie copy, update, retain, and retrieves relevant facts from the training corpus to construct effective in-context demonstrations that guide LLMs in precise knowledge editing.

\nosection{SERAC}
\cite{mitchell2022memory} develops a memory-based editing framework, where edits are cached in an explicit memory and retrieved at inference. A scope classifier decides whether the input falls within memory coverage. When the input falls within memory coverage, it is augmented with the most relevant memory entry and forwarded to a counterfactual model for prediction.

\nosection{WISE}
\cite{wang2024wise} introduces a dual-parametric memory with a main memory for pretrained knowledge and a side memory for edits. A router determines which memory to access for each query. To support continual editing, WISE adopts sharding and merging mechanisms that isolate edits in different parameter subspaces and integrate them without conflicts.

\nosection{MEND}
\cite{mitchell2021fast} designs model editor networks with gradient decomposition, a scalable approach for fast post-hoc editing of large pre-trained language models. Instead of directly fine-tuning model parameters, MEND employs lightweight auxiliary networks to transform fine-tuning gradients, using a low-rank decomposition to keep the transformation tractable.
We set the last three layers of MLLM as the tuned target for this auxiliary network in our experiments.

\nosection{T-Patcher}
\cite{huang2023transformer} proposes a lightweight approach for model editing, aimed at revising transformer-based pre-trained language models without affecting overall performance. Instead of updating all parameters, Transformer-Patcher adds a small set of trainable neurons, \ie patches, to the FFN layer, and trains them with activation and memory losses to respond only to targeted inputs.

\nosection{UniKE}
\cite{pan2024towards} presents a unified framework for multimodal knowledge editing by combining intrinsic memory updates and external memory resorting. Both types of knowledge are represented as key-value memories and edited in the latent space. Contrastive learning disentangles semantic and truthfulness aspects, allowing intrinsic and external knowledge to guide each other.

\begin{figure*}
\centering
\includegraphics[width=\linewidth]{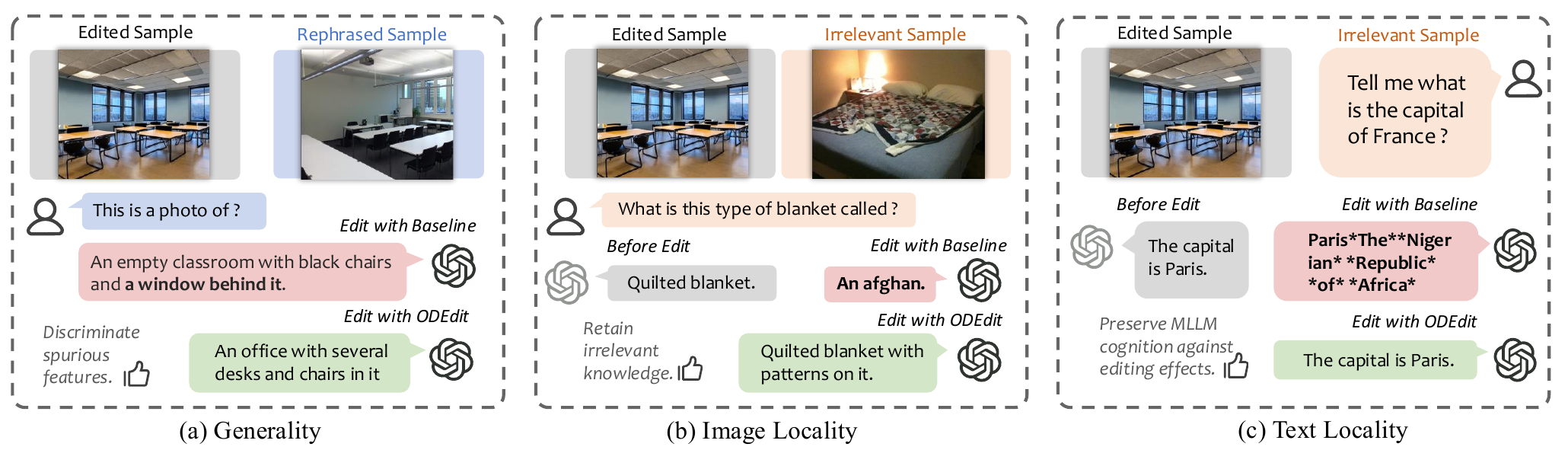}
\vspace{-0.05 in}
\caption{Case studies on the evaluation for generality, image locality, and text locality.} 
\label{fig:case}
\vspace{-0.25 in}
\end{figure*}

\subsection{Interpretability Studies} \label{app:interpret}
To evaluate the detailed effects of Maximum Mean Discrepancy Alignment and Edit Trajectory Invariant Learning, we apply the WISE method and the WISE+\modelname~ method on BLIP-2 OPT to conduct interpretability studies.
Figure \ref{fig:case} shows several qualitative cases.

For the generality evaluation, \modelname~eliminates the spurious environmental factor, \ie window, and produces generalized answers for rephrase prompts, while editing only with baseline fails to discriminate factual shifts and loses the critical invariant feature, \ie desk.
For image and text locality, \modelname~preserves accurate answers after editing, owing to the edit trajectory invariant learning.
In contrast, the cognition of MLLM on irrelevant samples is affected by editing in the baseline, leading to off-topic responses.

\end{document}